\def\eqref#1{equation~\ref{#1}}
\def\1{\bm{1}}
\DeclareMathAlphabet{\mathsfit}{\encodingdefault}{\sfdefault}{m}{sl}
\SetMathAlphabet{\mathsfit}{bold}{\encodingdefault}{\sfdefault}{bx}{n}
\setlist{nosep}
\title{
The Art of Being Difficult: \\
Combining Human and AI Strengths to Find Adversarial Instances for Heuristics
}
\author{Henri Nikoleit \\ University of Bonn  \\ lumimail@proton.me
    \And Ankit Anand \\ Google DeepMind \\ anandank@google.com
    \And Anurag Murty Naredla \\ University of Manitoba \\ anurag.naredla@umanitoba.ca 
    \And Heiko Röglin   \\ University of Bonn \\ roeglin@cs.uni-bonn.de
}
\date{July 2025}
\newcommand{\changed}[1]{{#1}}
\newcommand{\methodname}{Co-FunSearch}
\newtheorem{theorem}{Theorem}[section]
\newtheorem{lemma}[theorem]{Lemma}
\newcommand{\weightprofit}[2]{\begin{pmatrix}{#1}\\{#2}\end{pmatrix}}
\newcommand{\Weight}{\operatorname{Weight}}
\newcommand{\Profit}{\operatorname{Profit}}
\begin{document}

\maketitle

\begin{abstract}

We demonstrate the power of human-LLM collaboration in tackling open problems in theoretical computer science. Focusing on combinatorial optimization, we refine outputs from the FunSearch algorithm [Romera-Paredes et al., Nature 2023] to derive state-of-the-art lower bounds for standard heuristics. Specifically, we target the generation of adversarial instances where these heuristics perform poorly. By iterating on FunSearch's outputs, we identify improved constructions for hierarchical $k$-median clustering, bin packing, the knapsack problem, and a generalization of Lovász's gasoline problem—some of these have not seen much improvement for over a decade, despite intermittent attention. These results illustrate how expert oversight can effectively extrapolate algorithmic insights from LLM-based evolutionary methods to break long-standing barriers.

Our findings demonstrate that while LLMs provide critical initial patterns, human expertise is essential for transforming these patterns into mathematically rigorous and insightful constructions. This work  highlights that LLMs are a strong collaborative tool in mathematics and computer science research.
\end{abstract}

\section{Introduction}

Artificial Intelligence has advanced mathematics and theoretical computer science significantly, driving progress by proposing new conjectures in knot theory~\citep{davies2021}, devising novel algorithms~\citep{fawzi2022discovering}, and discovering new lower bounds and combinatorial constructions~\citep{romera2024mathematical, novikov2025alphaevolve, wagner2021constructions, mehrabian2023finding}. While impactful, many of these efforts~\citep{fawzi2022discovering, wagner2021constructions, mehrabian2023finding} rely on black-box neural networks or conventional metaheuristics such as tabu search and simulated annealing. A drawback of these methods is their opacity; they often yield results without providing the structural insights necessary for human experts to generalize or understand the underlying mechanisms.

However, recently, large language models in works like \citep{romera2024mathematical, novikov2025alphaevolve} have addressed this problem by using programs to represent complex mathematical objects and solutions compactly. The generated program is interpretable by humans and can be potentially modified to guide the LLM iterations in the right direction. 

Many combinatorial optimization problems have widespread real-world applications but are computationally intractable (e.g., NP-hard). A natural way to address these problems in practice is to devise new heuristics. It is equally important to analyze and understand when and how these heuristics fail. Analyzing the worst-case performance of heuristics can explain their performance in real-world applications, and knowledge of adversarial instances can help devise better heuristics.

In this work, we use the Human-LLM collaboration to generate adversarial examples for heuristics for a variety of combinatorial optimization problems. Specifically, we target well-known algorithms for the knapsack problem, bin packing, hierarchical clustering, and a variant of Lovász's gasoline puzzle, establishing improved lower bounds for each. 
Our use of LLMs follows the FunSearch paradigm~\cite{romera2024mathematical} that has improved existing bounds in the cap-set problem and bin-packing heuristics. Unlike most previous work, human-AI collaboration for our target problems is necessary to provide the final theoretical results and proofs.

\begin{figure*}[htb]
\centering
    \includegraphics[width=\textwidth]{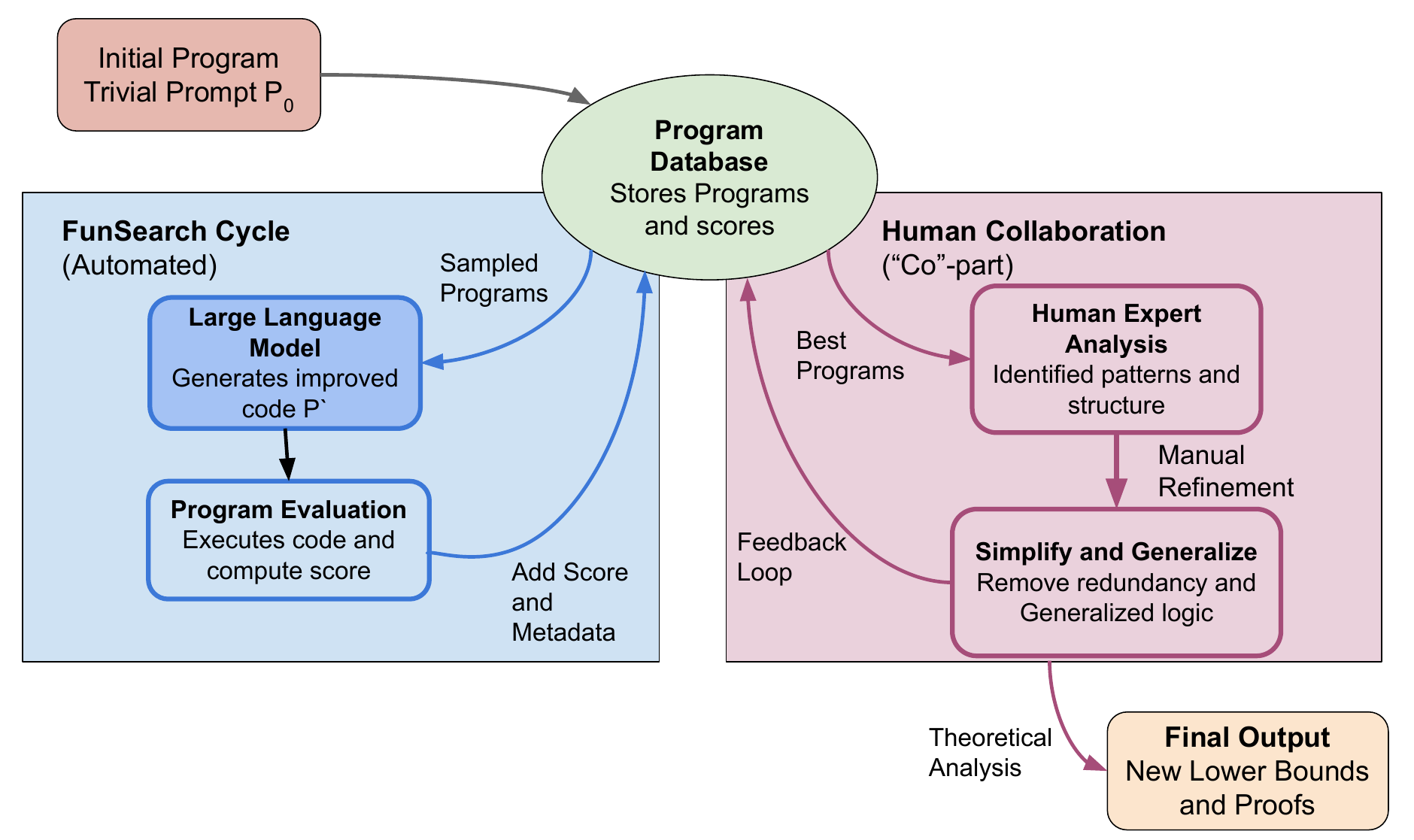}
    \caption{\changed{A diagrammatic representation of Co-FunSearch.}}
    \label{fig:diagram}
\end{figure*}

\noindent
\textbf{Methodology.} Our proposed framework, which we call, {\methodname} (short for {\em{Collaborative FunSearch}}), is summarized in Figure \ref{fig:diagram}.
 We begin with initial low scoring instances given to FunSearch as input and analyze the programs generated by FunSearch that achieve the highest scores. Some of the generated programs have interpretable structures relevant to the task, while others rely on hard-coded constants and offer little insight. We then manually refine the promising programs, removing components whose elimination does not reduce performance and simplifying the remaining logic wherever possible. For example, this may involve removing redundant elements of lists (Fig.~\ref{fig:code-bin-packing}, Fig.~\ref{fig:code-clustering}), or simplifying a list of $n$ ascending numbers into a list containing the mean of those numbers $n$ times (Fig.~\ref{fig:code-clustering}). Afterward, we attempt to prove statements about the scores of the instances, or otherwise feed the simplified programs back into FunSearch to obtain better results. These modifications were essential for generating meaningful structures, insights and obtaining state-of-the-art results.More importantly,  the collaborative workflow demonstrates FunSearch's potential for productive partnerships between computing experts and AI systems.

\noindent
\textbf{Summary of Results.} Table \ref{tab:main-results} summarizes our main results on all the problems. With {\methodname}, we were able to \emph{disprove} that the Nemhauser-Ullmann heuristic for the knapsack problem has output-polynomial running time, and we \changed{improve the lower bound} of the best fit heuristic for bin packing in the random order model \changed{from $1.3$ to $1.5$}. We also obtained the \emph{first non-trivial lower bound of the golden ratio} for the price of hierarchy for $k$-median clustering, and \emph{disprove the conjecture} that the iterative rounding algorithm for the generalized gasoline problem is a 2-approximation. We provide the source code for all the implementations at \url{https://github.com/lumi-a/funsearch}.

\noindent
\textbf{Why not Local Search?} FunSearch offers three distinct advantages over local search. First, while local search isolates single vectors, FunSearch discovers generic Python programs that scale with instance parameters. Second, it yields interpretable, modifiable code rather than opaque numeric vectors. Third, FunSearch exploits the low Kolmogorov complexity inherent in optimization problems, capturing structural symmetries that local search ignores. Notably, in bin packing, FunSearch generalized a pattern to \changed{obtain a lower bound of $1.5$ (surpassing the previous $1.3$)}, whereas local search produced unstructured solutions (see Table~\ref{tab:local-search-instance-comparison}). 

\begin{table*}[htb]
    \centering
    \begin{tabular}{lllll}
    \hline
    \hline
    Method & Knapsack & \changed{Bin-Packing} & $k$-median & Gasoline\\
    \hline\\
    Previous Best   \changed{Known}  Lower Bound & $2.0$ & $1.3$ & $1.0$ & $2.0$\\
    \hline
    Local Search & \changed{1.93} & $1.478$ & $1.36$ & $2.11$\\
    FunSearch & \changed{646.92} & $1.497$ & $1.538$ & $3.05$\\
    {\methodname} & $\mathbf{n^{O(\sqrt{n})}}$ & $\mathbf{1.5}$ & $\mathbf{1.618}$ & $\mathbf{4.65}$\\
    \hline
    Known  Upper Bound & $O(2^n)$ & \changed{1.7} & 16 &None \\
    \hline
    \end{tabular}
    \caption{Comparison of {\methodname} with base FunSearch, local search and SOTA on different problems. \changed{The given values for local search and FunSearch are the maxima across $30$ trials each.}}
    \label{tab:main-results}
\end{table*}

\begin{table}[t]
    \centering
    \begin{tabular}{llll}
        & Local Search & FunSearch & {\methodname}\\
        \hline
  Items & 0.003 & 0.08 & 0.167\\
        & 0.005 & 0.08 & 0.167\\
        & 0.006 & 0.08 & 0.167\\
        & 0.007 & 0.08 & 0.167\\
        & 0.021 & 0.08 & 0.167\\
        & 0.068 & 0.114 & 0.167\\
        & 0.073 & 0.114 & 0.143\\
        & 0.170 & 0.114 & 0.143\\
        & 0.202 & 0.114 & 0.143\\
        & 0.219 & 0.114 & 0.143\\
        & 0.306 & 0.114 & 0.143\\
        & 0.375 & 0.114 & 0.143\\
        & 0.540 & 0.2 & 0.143\\
        &       & 0.6  & \\
        \hline
    \end{tabular}
    
    \caption{Comparing the final instances found by local search, FunSearch and {\methodname} for the randomised Best-Fit bin-packing problem.}
    \label{tab:local-search-instance-comparison}
\end{table}

\newcommand{\Score}{\operatorname{Score}}

\section{Problems and Notation}
\subsection{General Framework for Adversarial Instance Generation}
We first propose a general framework for generating adversarial instances for any given heuristic, and then describe the particular problems we focus on in this work and how we instantiate this general framework for the given problem. Given an optimization problem (without loss of generality, a minimization problem), a heuristic algorithm $\mathcal{H}$ and a (computationally expensive) optimal algorithm {\it{Opt}}, the goal is to construct an instance $\mathcal{I}$ where the heuristic performs poorly with respect to {\it{Opt}}. More concretely for minimization problems, we aim to construct an adversarial instance $\mathcal{I}$ such that the ratio $R = \frac{\Score(\mathcal{H}(\mathcal I))}{\Score(\it{Opt}(\mathcal I))}$ is large, where  $\Score(\mathcal{H}(\mathcal I))$ denotes the value yielded by the heuristic algorithm and $\Score(\it{Opt}(\mathcal I))$ denotes the optimum value for $\mathcal I$.

While methods like local search, tabu search, and genetic algorithms have focused on generating adversarial instances for heuristics, this work focuses on using language models for generating the instances. Specifically, we model each instance as output of a program $\mathcal{P}$ s.t.\ $\mathcal{I} = \operatorname{Output}(\mathcal{P})$. Initially, a trivial instance is expressed as program $\mathcal{P}_0$. In addition, we prompt a large language model $\mathcal{L}$ that has proficiency in code generation and reasoning. At each iteration $i$, the language model takes as input one of the previously generated programs, $ p= \mathcal{P}_{<i}$ and generates an improved version $p'$ of $p$ such that it improves the reward $R$. We specifically follow the evolutionary approach used in \citet{romera2024mathematical} for generating these programs and optimizing the reward $R$.

\subsection{Problems and Heuristics}
We focus on four distinct problems and their corresponding heuristics to illustrate the effectiveness of this approach. These problems vary from knapsack, bin-packing, hierarchical clustering to the gasoline puzzle by Lov\'asz. While the approach is general, we believe the specific instantiation on these problems provides a general lens to find adversarial instances for any given heuristic.

\subsubsection{Nemhauser-Ullmann heuristic for the knapsack problem} \label{sec:knapsack}
In the classical NP-hard knapsack problem, an input consists of a set of $n$ items, where each item $i\in[n]$ has a profit $p_i\in\mathbb{R}_{>0}$ and a weight $w_i\in\mathbb{R}_{>0}$. Additionally, a capacity $t\in\mathbb{R}_{>0}$ is given, and the goal is to find a subset $I\subseteq[n]$ of the items such that the profit $\sum_{i\in I}p_i$ is maximized under the constraint $\sum_{i\in I}w_i \le t$. Without a given capacity $t$, the knapsack problem can also be viewed as a bi-objective optimization problem, where one wants to find a subset with small weight and large profit. These two objectives are obviously conflicting and there is no clear optimal solution anymore, but one rather has to find a good trade-off between the criteria. In multi-objective optimization, it is very common to compute the set of Pareto-optimal solutions where a solution is called Pareto-optimal if there does not exist another solution that is simultaneously better in all objectives (see, e.g., \cite{Ehrgott2005} for a comprehensive overview). Only Pareto-optimal solutions constitute reasonable trade-offs and for many multi-objective optimization problems, algorithms for computing the set of Pareto-optimal solutions are known  (e.g., for the multi-objective shortest path problem~\citep{CorleyM85}). These are usually no polynomial-time algorithms, as the set of Pareto-optimal solutions can be of exponential size. However, in practice the Pareto set is often small and one is interested in finding algorithms that are output-polynomial time, i.e., whose running time depends polynomially on the input and the output size. Such algorithms are efficient if the Pareto set is small, which is often the case in applications.

\paragraph{Nemhauser-Ullmann Heuristic} It is an open problem whether output-polynomial time algorithms for the knapsack problem (viewed as a bi-objective optimization problem) exist~\citep{RoeglinBookChapter}. The best candidate for such an algorithm is the Nemhauser-Ullmann algorithm, which is based on dynamic programming~\citep{NU69}. For a given instance of the knapsack problem with $n$ items, it computes iteratively the Pareto sets $\mathcal{P}_1,\ldots,\mathcal{P}_n$, where $\mathcal{P}_i$ denotes the Pareto set of the sub-instance that consists only of the first $i$ items (i.e., $\mathcal{P}_n$ is the Pareto set of the entire instance). The Nemhauser-Ullmann algorithm can be implemented to run in time $O(\sum_{i=1}^{n} |\mathcal{P}_i|)$. If there was an $\alpha$ such that  $|\mathcal{P}_i|\le\alpha|\mathcal{P}_n|$ for each instance and each $i$, one could bound the running time by $O(\alpha n |\mathcal{P}_n|)$, which would result in an output-polynomial time algorithm as long as $\alpha$ grows at most polynomially with $n$. So far, no instances were known where an intermediate set $\mathcal{P}_i$ is larger than the final Pareto set $\mathcal{P}_n$ by more than a small constant factor. With the help of an instance generated by FunSearch, we construct a sequence of instances disproving that the Nemhauser-Ullmann algorithm has output-polynomial running time.

\subsubsection{Best-Fit Heuristic for Bin Packing}\label{section:bin}

 Bin Packing is a classical NP-hard optimization problem that has been studied extensively as an online problem. In this problem, items with sizes $w_1,w_2,w_3,\ldots$ arrive one by one and an online algorithm has to assign each item irrevocably to a bin when it arrives. There is an unlimited number of bins with a fixed capacity $c$ available. The goal is to use as few bins as possible to pack all items. In the online setting, simple  algorithms like First-Fit and Best-Fit have been studied, which pack each arriving item into the first bin into which it fits or the fullest bin into which it fits, respectively. To mitigate the power of the adversary in classical worst-case analysis, these algorithms have been studied extensively in the random order setting, in which an adversary chooses the items' sizes but the items arrive in a random order. \changed{In the unshuffled setting, \citet{bestFitAbsoluteRatio} proved an upper bound of $1.7$ on the approximation-ratio of Best-Fit. This means that, on any instance, the expected number of bins used by Best-Fit is at most $1.7$ times the optimal number. As this holds for any instance, this upper bound also applies to the shuffled setting. In the shuffled setting, the best-known lower bound was $1.3$, i.e., there exists an instance such that, when the instance is shuffled, Best-Fit needs at least $1.3$ times the optimal number of bins, in expectation \citep{binPackingRevisited}. With the help of FunSearch, we improve this lower bound to $1.5$.}

\subsubsection{k-median in Hierarchical Clustering} \label{sec:hierarchical-clustering}

Hierarchical clustering is an important research topic in unsupervised learning. In such a clustering problem, usually a data set $X$ with $n$ points is given and one seeks for a sequence $\mathcal{H}_1,\ldots,\mathcal{H}_n$ of clusterings, where each $\mathcal{H}_k$ is a $k$-clustering of $X$, i.e., a partition of $X$ into $k$ parts. The clusterings must be hierarchically compatible, meaning that each $\mathcal{H}_k$ is obtained from $\mathcal{H}_{k+1}$ by merging two clusters. To evaluate the quality of such a hierarchical clustering, a common approach is to choose an objective function $\Phi$ like $k$-center, $k$-median, or $k$-means and to compare each clustering $\mathcal{H}_k$ with an optimal $k$-clustering $\mathrm{OPT}_k$ with respect to the objective $\Phi$. Then the approximation factor $\alpha$ of the hierarchical clustering can be defined as the worst approximation factor of any of the levels, i.e., $\alpha=\max_{k\in[n]} \Phi(\mathcal{H}_k)/\Phi(\mathrm{OPT}_k)$ (see, e.g., \cite{LinEtAl2010}). Since the optimal clusterings are usually not hierarchically compatible, an approximation factor of $1$ cannot be achieved even with unlimited running time. ~\citet{priceOfHierarchicalClustering} defined the \emph{price of hierarchy} of a clustering objective $\Phi$ as the best approximation factor that can be achieved for any clustering instance. They showed, e.g., that the price of hierarchy for the $k$-center objective is exactly $4$, meaning that for any instance of the hierarchical $k$-center problem there exists a hierarchical clustering with an approximation factor of $4$ and that there exists an instance for which any hierarchical clustering does not have a better approximation factor than $4$. For the $k$-median problem, no non-trivial lower bound on the price of hierarchy is known. The best known upper bound is 16 for general metrics \citep{dai2014}. We obtain the first non-trivial lower bound for the price of hierarchy for the $k$-median problem, showing that it is at least the golden ratio, ${\approx}1.618$.

\subsubsection{Gasoline Problem}\label{sec:gasoline}

The Gasoline problem is a combinatorial optimization problem inspired by Lov\'asz's gasoline puzzle~\citep{lovasz2007combinatorial}. In an instance of this problem, we are given two sets $X=\{x_1,\ldots,x_n\}$ and $Y=\{y_1,\ldots,y_n\}$ of non-negative numbers with the same sum. The goal is to find a permutation $\pi$ of the set $X$ that minimizes the value of $\eta$ such that
{\small\[
\forall [k,\ell]: \quad \Biggl|\sum_{i\in[k,\ell]} x_{\pi(i)} -
\sum_{i\in[k,\ell-1]}y_{i}\Biggr| \leq \eta.
\]}%
Given a circle with $n$ points labeled $1$ through $n$, the interval
$[k,\ell]$ denotes a consecutive subset of integers assigned to points
$k$ through $\ell$.  For example, $[5,8] = \{5,6,7,8\}$, and $[n-1,3] = \{n-1,n,1,2,3\}$. The intuition is that the $y_i$-values correspond to road segments on a cycle and the $x_i$-values correspond to fuel canisters that can be placed between the segments. The goal is to distribute the canisters such that one can get around the cycle with the smallest possible fuel tank capacity $\eta$.

The Gasoline problem is known to be NP-hard, and a 2-approximation algorithm for it is known \citep{Gasoline2018}. It is an open problem whether better approximation algorithms or even a polynomial-time approximation scheme exist. In the literature, another heuristic for the problem has been considered that is based on iteratively rounding the linear programming relaxation \citep{rajkovic}. The approximation guarantee of this algorithm is unknown. In his master's thesis, Lorieau constructed a class of instances showing that its approximation factor is not better than $2$ \citep{Lorieau}. Lorieau conjectured that it is actually a $2$-approximation algorithm, but this has not been proven yet. The iterative rounding algorithm is interesting because it generalizes canonically to a $d$-dimensional Gasoline problem in which $x_i$ and $y_i$ are $d$-dimensional vectors. Also for this generalization, the best-known lower bound was $2$ and Lorieau conjectured that also for this generalization the algorithm achieves a $2$-approximation. With {\methodname}, we obtain a family of instances disproving this conjecture.

\section{Experimental Details and Results}\label{sec:experimental-details}

We compare {\methodname} to base FunSearch and local search on the above 4 problems. The main goal in all these problems is to search for a vector $v$ \changed{(encoding the instance)} which optimizes the given objective \changed{(usually some performance-measure of some heuristic on this specific instance). The objectives depend on the problem, and can be found in Section \ref{sec:key-results}.}
Random search works by initializing a random vector $v$. At each step, sample a random vector $v'$ close to $v$ and check if $v'$ improves on the objective. If it does, replace $v$ by $v'$ with some probability $p$, otherwise keep $v$ unchanged. This procedure keeps improving on the objective until reaching a local minimum. For our experiments, $v'$ arises from $v$ by adding independent normally-distributed noise with mean $0$ and variance $s \cdot (1-\frac{t}{t_{\max}})$ to each coordinate of $v$ (clipping $v'$ to the problem's bounds as required), where $t$ is the current time since the start of the search, $t_{\max}$ is the time after which we terminate the search (set to 3 minutes), and $s$ is a problem-specific parameter. For the knapsack-problem, we chose $20$ items and $s=1000$, because both weights and profits were rounded before evaluation to be less sensitive to floating-point imprecision. For bin-packing, we chose $13$ bins with capacity $1$ and $s=0.2$. For weighted hierarchical clustering, we chose $8$ points, $s=0.2$, and replaced each point's weight $w$ to $2^w$ before evaluation, because we observed worst-case instances' weights frequently spanning several orders of magnitude. For the two-dimensional gasoline-problem, we chose $s=0.2$ and $|X|=|Y|=14$.

FunSearch works similarly: Instead of searching for a vector $v$ that has a high objective, it searches for a Python-program $P$ outputting a vector with high objective. Sampling a Python-program $P'$ ``close" to $P$ is not done by randomly changing characters in the program's source-code, but by prompting an LLM with the source-code of $P$, requesting a similar program which improves the score. The scoring-function is not provided to the LLM. The newly generated program (if it executes without error) is added to a database of programs with its score. In the next iteration, a new program is sampled from the database according to a probability distribution and the process is repeated. More details about the evolutionary search can be found in \citet{romera2024mathematical}. To evaluate a given program, we use problem-specific scoring-functions, described in their respective sections below.

\subsection{Results}

Table \ref{tab:main-results} outlines the main results for all four problems. Our main results are as follows:

\begin{itemize}
    \item For the knapsack problem, the local search method \changed{only achieves $1.93$, whereas FunSearch found instances with a score of $646.92$.} The compact program found by FunSearch could be improved to a general super-polynomial bound $n^{O(\sqrt{n})}$. Since then, an unrelated exponential bound has been found, discussed in \citep{mastersthesis}. Also this bound was optimized using Co-FunSearch.
    \item For the Best-Fit heuristic for bin packing, FunSearch finds an instance which is $1.497$ times worse than optimal, outperforming both the existing SOTA ($1.3$) and local search ($1.478$). This instance could easily be generalized, yielding an asymptotic bound of $1.5$.
    \item For the hierarchical $k$-median problem, no non-trivial lower bounds were previously known. FunSearch ($1.538$) outperforms local search ($1.36$) with an instance that we could modify to yield a lower bound of the golden ratio (${\approx}1.618$). 
    \item Lastly, in Lov\'asz's gasoline problem, FunSearch ($3.05$) outperforms both the SOTA ($2.0$) and local search ($2.11$), and could be further improved to $4.65$.
\end{itemize}

\paragraph{Generated Programs with FunSearch and {\methodname}}
In this section, we illustrate the programs found by FunSearch and how these programs are modified by experts to obtain adversarial instances which are much better in score and are generalizable with guarantees. Fig.~\ref{code:bin-packing-start} shows the initial program given in the bin-packing problem, Fig.~\ref{code:bin-packing-funsearch-output} shows the instance generated by Fun-Search, which achieves a score of $1.4978$, and Fig.~\ref{code:bin-packing-co-funsearch} shows how we generalized this instance: The instance consists of two types of items in a list which are generalized as entries ``$a$" and ``$b$" in the figure. Specifically, for large $a$ and $b$, this instance's score approaches $1.5$. Similar to Figure~\ref{fig:code-bin-packing}, we compare the initial program, the program generated by FunSearch, and the program obtained via human collaboration for the knapsack      (Figure \ref{fig:code-nemhauser-ullmann}), hierarchical clustering (Figure \ref{fig:code-clustering}), and the gasoline problem (Figure \ref{fig:code-gasoline}).

\begin{figure*}[htb]
    \centering
    \begin{subfigure}[b]{\textwidth}
        \centering
        \begin{minted}[breaklines,fontsize=\scriptsize]{python}
def get_items() -> list[float]:
    """Return a new bin-packing-instance, specified by the list of items.

    The items must be floats between 0 and 1."""
    items = [0.4, 0.5, 0.6]
    return items
        \end{minted}
        \caption{Initial program.}
        \label{code:bin-packing-start}
    \end{subfigure}
      \begin{subfigure}[t]{\textwidth}
        \centering
        \begin{minted}[breaklines,fontsize=\scriptsize]{python}
def get_items() -> list[float]:
    """Return a new bin-packing-instance, specified by the list of items.

    The items must be floats between 0 and 1."""
    """Yet another version of `get_items_v0`, `get_items_v1`, and `get_items_v2`, with some lines altered."""
    items = [0.8, 0.2, 0.6, 0.4]
    # Split the first item into seven smaller items and the fourth item into five smaller items
    items = [0.114, 0.114, 0.114, 0.114, 0.114, 0.114, 0.114] + items[1:3] + [0.08, 0.08, 0.08, 0.08, 0.08]
    return items
        \end{minted}
        \caption{A program found by FunSearch after $10$ trials of 2,400 samples each.}\label{code:bin-packing-funsearch-output}
    \end{subfigure}
    
    \begin{subfigure}[b]{\textwidth}
        \centering
        \begin{minted}[breaklines,fontsize=\scriptsize]{python}
def get_items() -> list[float]:
    a = 7
    b = 5
    return [1.0 / a] * a + [1.0 / b] * b
        \end{minted}
        \caption{An intermediate step of tuning program in Figure \ref{code:bin-packing-funsearch-output} by hand}
        \label{code:bin-packing-co-funsearch}
    \end{subfigure}
    \\[0.5em]

    \caption{The evolution of programs generating bin packing instances, with model open-mistral-nemo and a temperature of $1.5$.}
    \label{fig:code-bin-packing}
\end{figure*}

\subsection{Ablations}
Figure \ref{fig:ablations} shows the search dynamics with variations across different models, the temperature parameter and the initial program used during FunSearch. In all these experiments, we plot the maximum score of samples produced so far against the number of samples (LLM-prompts), together with the standard error across $30$ trials. To illustrate the effect of variations and due to high computational cost (inference costs) of each experiment, we undertake these ablations on a single problem but believe similar trends would hold for all the other problems as well.

\textbf{Variations across different models:} Fig.~\ref{fig:models} shows the variations with two models from \mbox{OpenAI}, gpt-4.1-mini \citep{gptmini} and gpt-4.1-nano \citep{gptnano} with Mistral AI's open-mistral-nemo model \citep{openmistralnemo}. We observe that gpt-4.1-nano slightly outperforms gpt-4.1-mini. This is a bit counterintuitive, as gpt-4.1-mini is a more powerful model than gpt-4.1-nano. To investigate this further, we plot the both the maximum score and the rolling average score of the last 10 samples (Figure \ref{fig:nano-vs-mini}). Here, gpt-4.1-mini outperforms gpt-4.1-nano on the rolling average but performs slightly poorer on the maximum score, highlighting that, although larger models are stronger on average, in problems with verifiable score where one cares about the best performing sample, smaller models are sufficient and can outperform larger ones.

\textbf{Variations across temperature:} Fig.~\ref{fig:temperature} shows the variations of the objective function with the change in the sampling temperature. The sampling temperature is an indicator of sharpness of the LLM's probability distribution for each sample (the lower the temperature, the more sharp it is).  We observe that the higher sampling temperature performs better than lower sampling temperature, owing to high entropy of samples produced in inference. It should be noted that we plot the best score obtained across all samples as objective, so even if the mean performance drops, the best sample is better owing to increase in entropy and diversity.

\textbf{Variations across initial prompts:} Another critical hyperparameter in FunSearch as outlined by \citet{romera2024mathematical} is the initial instance given to a FunSearch experiment. In Fig.~\ref{fig:startpoints}, we vary the initial program for FunSearch on the bin packing problem. We observe that a trivial instance with a more flexible structure (a for-loop adding the items $1/i$ for $i\in\{1,...,10\}$) starts from a low initial score but improves as more and more samples are drawn in FunSearch. Additionally, we hard-code a trivial instance as numbers without appropriate structure, and although this improves with more samples, the performance is inferior to both the trivial instance with the structure and the best known construction. Observing the output, the variations introduced by FunSearch consist of different hardcoded numbers, as opposed to inserting more structure, like loops or maths-functions, into the program. This highlights the importance of an appropriate structure and skeleton for the initial program in FunSearch. We compare this with using the best known construction \citep{binPackingRevisited} as the initial instance, which does start from a high score initially but stagnates quickly with iterations.

\begin{figure*}[]
\centering
    \begin{subfigure}{0.45\textwidth}
        \includegraphics[width=\textwidth]{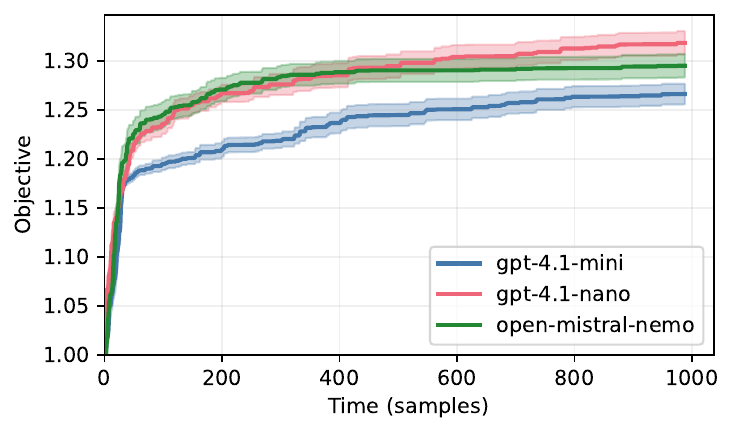}
        \caption{}
        \label{fig:models}
    \end{subfigure}
    \begin{subfigure}{0.45\textwidth}
        \includegraphics[width=\textwidth]{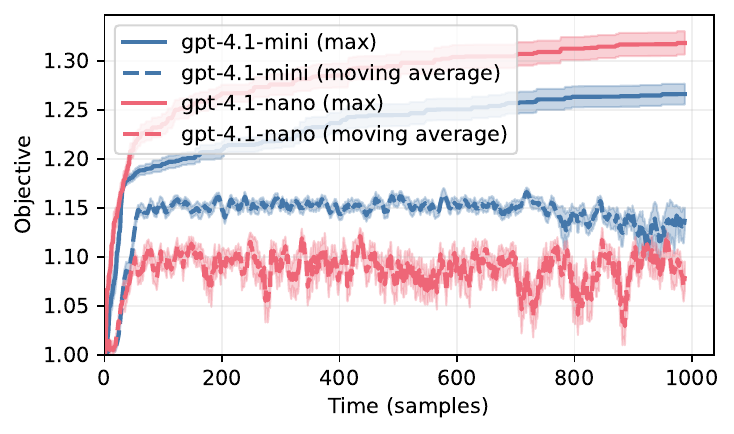}
        \caption{}
        \label{fig:nano-vs-mini}
    \end{subfigure}
    \begin{subfigure}{0.45\textwidth}
        \includegraphics[width=\textwidth]{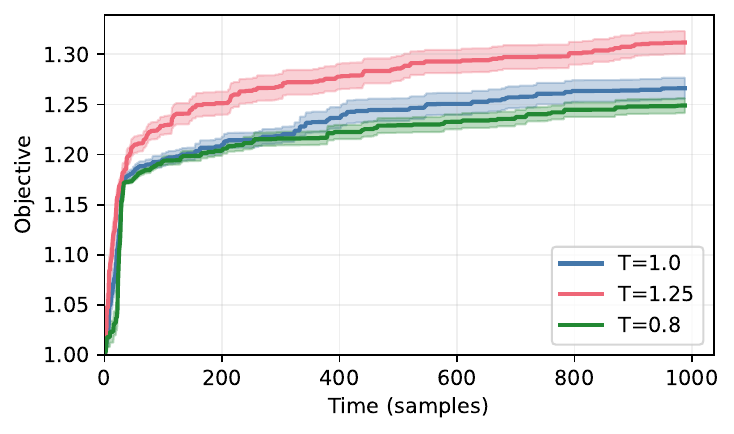} 
        \caption{}
        \label{fig:temperature}
    \end{subfigure}
    \begin{subfigure}{0.45\textwidth}
        \includegraphics[width=\textwidth]{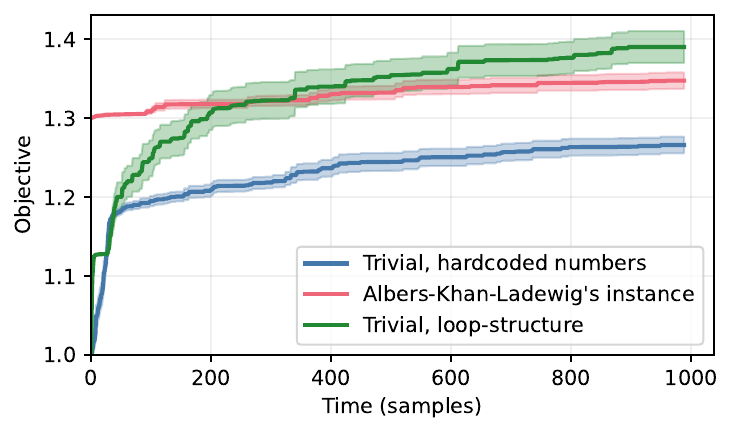}
        \caption{}
        \label{fig:startpoints}
    \end{subfigure}
    \caption{Comparing the effect of different hyperparameters on the objective function in bin packing.(a) Comparing different models, each with temperature 1.0 and starting with a hard-coded instance.(b) Comparing rolling average (10 samples) and max-performance of gpt-4.1-mini with gpt-4.1-nano, with temp: $1.0$.(c) Variation of different sampling temperatures for gpt-4.1-mini, each starting with a hard-coded instance.(d) Variation of initial instances for gpt-4.1-mini with temperature $1.0$.}
    \label{fig:ablations}
\end{figure*}

\subsection{{\methodname} and Key Results} \label{sec:key-results}
In this section, we highlight how we used FunSearch to find instances and generalized them to achieve improved lower bounds for each problem. Furthermore, we also provide proofs for lower bounds for most of these instances.

\subsubsection{Knapsack Problem} \label{sec:knapsack-res-instances}
We consider the knapsack problem (as described in Section \ref{sec:knapsack}) as a bi-criteria optimization problem, where we want to minimize the total weight while maximizing the total profit. We used FunSearch to find instances $I$ that have a high score $\max_{1\leq i\leq n} |{\mathcal P}_i(I)| /|{\mathcal P}(I)|$, i.e., where the Pareto set ${\mathcal P}_i(I)$ of a sub-instance $I_{i}$, which consists only of the first $i$ items of $I$, is much larger than the Pareto set ${\mathcal P}(I)$ of the entire instance $I$. The sizes of the intermediate and final Pareto-sets are obtained as a by-product of running the Nemhauser-Ullmann algorithm on $I$. Items are written as tuples of the form (weight, profit).

We obtain the code (as shown in Figure~\ref{code:knapsack-funsearch-output} in Appendix) after running FunSearch for 10 trials of 500 samples each.
Having simplified the output (shown in Fig.~\ref{code:knapsack-co-funsearch}), we can scale all items' weights up by a factor of $2$ (which does not affect Pareto-optimality), decrease some profits by $1$, and change the last item to obtain the following tidier instance, which achieves slightly higher scores for the same $n$:
\small{
\[
    \biggl[
        \underbrace{\weightprofit{8}{8},...,\weightprofit{8}{8}}_{n \text{ times}},\ \ 
        \underbrace{\weightprofit{2}{1},...,\weightprofit{2}{1}}_{n \text{ times}},\ \
        \weightprofit{4}{4},\weightprofit{2}{2}
    \biggr].
\]}%
From here, we attempted to prove results about the instance. After a first draft, we found it more natural to replace the first $n$ items by $n$ powers of $2$, and saw that stronger results are possible by replacing the last two items by $k$ powers of $2$:
{\small{
\begin{alignat*}{1}
    \biggl[
        \weightprofit{2^{2k}}{2^{2k}},\weightprofit{2^{2k+1}}{2^{2k+1}},...,\weightprofit{2^{2k+n}}{2^{2k+n}}&,\ \ 
        \underbrace{\weightprofit{2^k}{2^k - 1},...,\weightprofit{2^k}{2^k -1}}_{n \text{ times}},\\
        \weightprofit{2^{2k-1}}{2^{2k-1}},\weightprofit{2^{2k-2}}{2^{2k-2}},&...,\weightprofit{2^{k+1}}{2^{k+1}}
    \biggr].
\end{alignat*}}}%
Finally, to apply our result not only to the size of the Pareto sets but also to the runtime of the Nemhauser-Ullmann algorithm\footnote{The difference between the size of the Pareto set and the running time of the Nemhauser-Ullmann algorithm is that, for the Nemhauser-Ullmann algorithm, multiple Pareto-optimal solutions with exactly the same profit and weight are treated as a single solution for the running time.}, we appended the factors $x_i \coloneq (1+\frac{2^{-i}}{2^k-1})$ to the $n$ center items:
{\small{
\begin{equation}\label{eq:knapsack-instance}
    \begin{aligned}
    \biggl[
        \weightprofit{2^{2k}}{2^{2k}},...,\weightprofit{2^{2k+n}}{2^{2k+n}},\ \ 
        \weightprofit{x_1 \cdot 2^k}{x_1\cdot(2^k - 1)},...,\\\weightprofit{x_n \cdot 2^k}{x_n\cdot(2^k -1)},\
        \weightprofit{2^{2k-1}}{2^{2k-1}},...,\weightprofit{2^{k+1}}{2^{k+1}}
    \biggr].
    \end{aligned}
\end{equation}}}%
By choosing $k = \log_2(\sqrt n)+1$, this instance shows:

\begin{theorem}
    The Nemhauser-Ullmann algorithm does not have output-polynomial running time.
\end{theorem}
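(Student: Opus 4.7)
The plan is to show a super-polynomial gap between the intermediate Pareto set $|\mathcal P_{2n+1}|$ (obtained after processing groups A and B) and the final Pareto set $|\mathcal P_N|$. Because the Nemhauser-Ullmann running time is $\Theta(\sum_i|\mathcal P_i|)\ge|\mathcal P_{2n+1}|$, showing that this ratio is super-polynomial in $n$ rules out any bound of the form $O(\alpha n|\mathcal P_N|)$ with polynomial $\alpha$, which is exactly what the theorem asks to disprove.

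First I would parametrize each subset of items by a pair $(A,S)$, where $A=b\cdot 2^{2k}+c$ is the combined group A and group C contribution and $S\subseteq[n]$ is the group B subset, then observe that the resulting $(w,p)$ lies on the line $p=w-\sigma_S$ with $\sigma_S=|S|+T_S/(2^k-1)$ and $T_S=\sum_{i\in S}2^{-i}$. Working out the defining inequalities, $(A_1,S_1)$ dominates $(A_0,S_0)$ precisely when $A_1>A_0$ and $\sigma_0-\sigma_1\in[(A_1-A_0)/2^k,\,(A_1-A_0)/(2^k-1)]$. This window has length $(A_1-A_0)/(2^k(2^k-1))$, so at the smallest admissible step of $A$ the integer part $|S_0|-|S_1|$ is essentially pinned and the allowed fractional gap $T_{S_0}-T_{S_1}$ is controlled by the $x_i$-perturbation.

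Next I would apply this to each set in turn. In $\mathcal P_N$, group C refines the $A$-grid to multiples of $2^{k+1}$, so the smallest step $A_1-A_0=2^{k+1}$ forces $|S_0|-|S_1|=2$, and any $S_1\subsetneq S_0$ of size $|S_0|-2$ witnesses dominance. Every $(A_0,S_0)$ with $|S_0|\ge 2$ is therefore dominated unless $A_0$ is maximal, while $|S_0|\le 1$ gives $\sigma_{S_0}<2$ and so trivially avoids dominance at any step. Counting yields $|\mathcal P_N|=(n+1)\cdot 2^{n+k}+(2^n-n-1)=\Theta(n^{1.5}\cdot 2^n)$. In $\mathcal P_{2n+1}$ by contrast, $A=b\cdot 2^{2k}$ lives on the coarser grid of multiples of $2^{2k}$; the analogous window at the smallest step now forces $|S_0|-|S_1|\ge 2^k$, so every pair with $|S_0|<2^k$ is Pareto-optimal regardless of $b_0$, giving
\[
|\mathcal P_{2n+1}|\;\ge\;2^{n+1}\sum_{j=0}^{2^k-1}\binom{n}{j}\;\ge\;2^{n+1}\binom{n}{2^k-1}.
\]

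Finally, plugging in $k=\log_2(\sqrt n)+1$ so that $2^k=2\sqrt n$, Stirling's approximation gives $\binom{n}{2\sqrt n-1}=n^{\Omega(\sqrt n)}$, and the ratio $|\mathcal P_{2n+1}|/|\mathcal P_N|$ is therefore $n^{\Omega(\sqrt n)}$, super-polynomial in $n$. The main obstacle I expect is the dominance analysis in the presence of the $x_i$-perturbations: one must verify both that witness subsets $S_1\subsetneq S_0$ land inside the sub-integer $T$-window dictated by the fractional part of $\sigma$, and that no unforeseen dominator slips through for $|S_0|<2^k$ via a larger step $A_1-A_0$. The specific choice $x_i=1+2^{-i}/(2^k-1)$ is calibrated so that the first check reduces to the trivial inequality $T_{S_1}<T_{S_0}$ whenever $S_1\subsetneq S_0$, and the second reduces to a straightforward lower bound on $|S_0|$ required for any larger step to produce a valid dominator.
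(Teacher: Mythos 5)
Your proposal is correct and works with the same construction (instance~\ref{eq:knapsack-instance}) and the same overall plan as the paper: compare the Pareto set of the prefix consisting of the $n+1$ power-of-two items and the $n$ perturbed items against the final Pareto set, set $k=\log_2(\sqrt n)+1$, and conclude from a ratio of order $n^{\Omega(\sqrt n)}$. The difference lies in the technical core. The paper establishes the structure of $P([I_{a,b},J_{d,n}])$ via two lemmas: an explicit exchange (increment the binary number encoding the power-of-two items, delete $2^{a-d}$ perturbed items) showing that any packing missing a power-of-two item but holding at least $2^{a-d}$ perturbed items is dominated (Lemma~\ref{small-Jdn}), and a profit-per-weight argument for the converse, which together give a closed formula for the Pareto-set size for all parameters. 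You instead parametrize packings as $(A,S)$, note that they lie on the lines $p=w-\sigma_S$, and derive the exact dominance window $\sigma_0-\sigma_1\in[(A_1-A_0)/2^k,\,(A_1-A_0)/(2^k-1)]$, from which both domination and non-domination, and the counts, follow from the grid spacing of $A$ ($2^{k+1}$ in the full instance versus $2^{2k}$ in the prefix); your counts $|\mathcal{P}_N|=(n+1)(2^{n+k}-1)+2^n$ and $|\mathcal{P}_{2n+1}|\ge 2^{n+1}\binom{n}{2^k-1}$ agree with the paper's, and the window criterion is arguably cleaner since it settles both directions at once. One step you gloss over, which the paper treats explicitly (the footnote on the runtime and Lemma~\ref{pareto-unidentical}): the Nemhauser-Ullmann running time counts distinct (weight, profit) pairs, with duplicates merged, so to justify ``runtime $\ge|\mathcal{P}_{2n+1}|$'' you must argue that distinct packings of this instance have distinct value pairs. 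In your parametrization this is immediate---$\sigma_S$ determines $S$ (integer part gives $|S|$, fractional part gives $S$ itself) and then the weight determines $A$---and this, rather than the dominance windows, is the essential role of the perturbation $x_i$ (its smallness merely keeps the windows intact); you should state it explicitly. Your closing inference, that a superpolynomial ratio $\max_i|\mathcal{P}_i|/|\mathcal{P}_N|$ rules out any $O(\alpha n|\mathcal{P}_N|)$-type bound with polynomial $\alpha$ and hence output-polynomiality, is exactly the paper's own reading of the theorem.
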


 Before this work, no such instances were known. We refer to Appendix~\ref{appendix:knapsack-res-instances} for further details and proofs. After finding this instance, we found an independent construction that even shows an exponential lower bound. See \cite[Corollaries 4.2.10 and 4.2.11]{mastersthesis} for details. This lower bound was also obtained by Co-FunSearch.

\subsubsection{Bin-Packing} \label{sec:binpacking-res-instance}
As outlined in Section~\ref{section:bin}, we study the Best-Fit heuristic for the bin packing problem in the random-order setting. To evaluate a generated instance, we compute the value $v_{\operatorname{opt}}$ of an optimum solution with a solver described and implemented in \cite{fontan}, then compute the mean $v_{\operatorname{appx}}$ of 10,000 trials of the Best-Fit algorithm when the instance is shuffled randomly, and assign the instance a score of $\frac{v_{\operatorname{appx}}}{v_{\operatorname{opt}}}$. Fig.~\ref{fig:code-bin-packing} shows the programs generated by FunSearch. It is straightforward to observe that Fig.~\ref{code:bin-packing-funsearch-output} has multiple repetitions. We simplified this code to a list with only two parameters (Fig.~\ref{code:bin-packing-co-funsearch}).

\textbf{Instance Generated by {\methodname}:} For fixed $m \in \mathbb N$, consider the instance with maximum bin capacity $c \coloneq m\cdot(m+1)$ and items:
\[
    [\underbrace{m+1, \dots, m+1}_{m\text{ times}}, \underbrace{m, \dots, m}_{m+1\text{ times}}].
\]
An optimal packing puts the first $m$ items into one bin, and the remaining $m+1$ items into a second bin. This fills both bins exactly to their maximum capacity. Because $m$ and $m+1$ are coprime, these are the only two ways of filling a bin exactly to its maximum capacity $c$. Hence, if any bin contains both an item $m$ and an item $m+1$, the packing must use at least $3$ bins. Because the instance is shuffled, Best Fit will put both an item of size $m$ and an item of size $m+1$ into the same bin with high probability, approaching probability $1$ for growing $m$. Thus, with high probability, Best-Fit will use at least $3$ bins, which shows that the absolute random-order ratio of Best-Fit is at least $3/2$ (the previous best known lower bound was $1.3$, by \citet{binPackingRevisited}). Combining with the results of \changed{\citet{bestFitAbsoluteRatio}}, we obtain the following theorem:

\begin{theorem}
    The absolute random-order ratio of Best-Fit is \changed{between $1.5$ and $1.7$}.
\end{theorem}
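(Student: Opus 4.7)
The plan is to derive the upper bound of $1.7$ directly from \citet{bestFitAbsoluteRatio}: their guarantee holds for every arrival order of every instance, so it dominates the expected cost under any random shuffle. What remains is to establish the lower bound $1.5$ on the instance family specified in the theorem---capacity $c = m(m+1)$ with $m$ items of size $m+1$ and $m+1$ items of size $m$---by showing that $\mathbb{E}[\text{Best-Fit bins}]/\mathrm{OPT} \to 3/2$ as $m \to \infty$.

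The first step is to characterize two-bin packings. Since the total item weight is exactly $2c$, any two-bin packing must fill both bins to $c$; the Diophantine equation $a(m+1) + bm = m(m+1)$ combined with $\gcd(m, m+1) = 1$ forces $a \in \{0, m\}$, so each bin must be \emph{pure} (contain items of only one size). Equivalently, Best-Fit uses only two bins iff it never places items of both sizes into the same bin.

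The second step is to trace Best-Fit on a random permutation and show that a mixed bin arises exactly when the first $m$ positions are not all of the same size. For the ``remains unmixed'' direction, suppose the first $m$ positions are all $m$-items: Bin $1$ reaches sum $m^2$, any subsequent $(m+1)$-item opens Bin $2$ (since $m^2 + (m+1) > c$), and a short case check shows that the one remaining $m$-item returns to Bin $1$ (the fuller of the two bins it still fits into) while each later $(m+1)$-item goes to Bin $2$; the symmetric case (first $m$ positions are all $(m+1)$-items) is easier because Bin $1$ already sits at exactly $c$. For the converse, if some position $k+1 \le m$ is the first to break the uniform prefix, Bin $1$ still has room for the minority item (a direct check using $k \le m-1$ shows that both $km + (m+1)$ and $k(m+1) + m$ are at most $c$), so Best-Fit, with only Bin $1$ open, places the minority item into Bin $1$, mixing it.

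Finally, count: the probability that a uniformly random permutation has its first $m$ entries all of one size equals $(m+2)/\binom{2m+1}{m}$, which decays like $m/4^m$. Hence $\mathbb{E}[\text{Best-Fit bins}] \ge 3 - (m+2)/\binom{2m+1}{m}$; dividing by $\mathrm{OPT} = 2$ and taking $m \to \infty$ gives the claimed lower bound $3/2$. The main obstacle is the trace in the second step, specifically verifying that after the pure prefix ``first $m$ are $m$-items'' and the $(m+1)$-st arrival opens Bin $2$, the lone remaining $m$-item indeed returns to Bin $1$ rather than mixing Bin $2$; this hinges on the inequality $m^2 > k(m+1)$ for $k \le m-1$, which keeps Bin $1$ the fuller bin at the moment the stray $m$-item must be placed.
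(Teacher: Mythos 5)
Your proposal is correct and follows essentially the same route as the paper: the same instance family with capacity $m(m+1)$, the same coprimality argument showing any bin mixing the two sizes forces at least $3$ bins against $\mathrm{OPT}=2$, and the same appeal to \citet{bestFitAbsoluteRatio} for the $1.7$ upper bound. Your only addition is to make rigorous the paper's asserted ``with high probability'' step, via the exact characterization that Best-Fit stays at two bins iff the first $m$ arrivals share a size and the resulting probability $(m+2)/\binom{2m+1}{m}\to 0$, which is a correct (and welcome) elaboration rather than a different approach.
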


\subsubsection{Hierarchical Clustering} \label{sec:clustering-res-instance}
In clustering, we're given a set of $n$ weighted points and a number $k$, with the task of finding a partition of the set of points into $k$ \emph{clusters} such that the total cost of the clustering is small. In $k$-median clustering, the points are a finite subset of $\mathbb{R}^d$ and the cost of a cluster $C$ is defined as the sum of the weighted $L^1$-distances all points have to the center, where the center is the best possible choice within that cluster:
\[
    \operatorname{Cost}(C)
    = \min_{p\in C} \sum_{q \in C} w(q)\|p-q\|_1
\]
Here, $w(q)$ is the weight of $q$ as specified by the instance. The total cost of a clustering is the sum of the costs of its clusters.

Clustering is used to analyze empirical data, but it's usually not clear what number of clusters $k$ is a good choice for the dataset. Instead of computing a clustering for a fixed $k$, one could compute a \emph{Hierarchical Clustering}, which has a clustering for each $k \in \{1,...,n\}$ and these clusterings are nested: A hierarchical clustering $H = (H_1,...,H_n)$ consists of $n$ clusterings such that, for all $i\in\{2,...,n\}$, $H_i$ is obtained by merging two clusters of $H_{i+1}$.

While hierarchical clusterings have an intuitive structure and don't require to decide on a number $k$ of clusters beforehand, they come at the disadvantage of their clusterings $H_i$ possibly having a higher cost than the \emph{optimal} $i$-clustering, because optimal clusterings need not form a hierarchy. For a given instance (a finite set of points in $\mathbb{R}^d$) $I$, we can measure the performance of a hierarchical clustering $H$ by comparing each of its clusterings $H_i$ to the best $i$-clustering, and choosing the level where this ratio is highest.

To measure how much we sacrifice when restricting ourselves to hierarchical clusterings on an instance $I$, we consider the \emph{Price of Hierarchy of $I$} as the best hierarchical clustering according to that measure:
\[
    \operatorname{PoH}(I)
    \coloneq
    \min_{H}
    \max_{k\in\{1,...,n\}}
    \Bigl[
    \frac{\operatorname{Cost}(H_k)}{\operatorname{Cost}(\operatorname{OPT}_k)}
    \Bigr],
\]
where $\operatorname{OPT}_k$ denotes an optimal $k$-clustering for $I$.

The \emph{Price of Hierarchy for $k$-median clustering $\operatorname{PoH}_{k\text{-median}}$} denotes the worst-case Price of Hierarchy of $I$ across all possible instances $I$. Thus, $\operatorname{PoH}_{k\text{-median}}$ captures the worst-case quality of an optimal hierarchical clustering when compared to an optimal non-hierarchical clustering.

With Co-FunSearch, we found the following lower bound construction for the price of hierarchy for $k$-median clustering (see also Fig.~\ref{fig:code-clustering}).
Fix the dimension $d\geq4$. Put $c \coloneq \frac{\sqrt{4d^2 + (3-d)^2} + d - 3}{2}$, which is one of the two roots of $0 = c^2 - c(d-3)-d^2$. Because $d\geq4$, we know that $5d^2 - 6d \geq 4d^2$, hence:
\[
    c
    = \frac{\sqrt{4d^2 + (d-3)^2} + d - 3}{2}
    > \frac{2d + d - 3}{2}
    > d.
\]
Let $e_i$ be the $i$th $d$-dimensional standard basis vector. Consider the following weighted instance of $d+2$ points:
\[
    (1,\dots,1),\quad
    (0,\dots,0),\quad
    -c e_1,\ 
    \dots,\ 
    -c e_d,
\]
where the point $(1,\dots,1)$ has weight $\infty$ and all other points have weight $1$.

\begin{theorem}\label{theorem:clustering}
For $k$-median clustering, this instance's price of hierarchy is at least $\frac{c}{d}$.
\end{theorem}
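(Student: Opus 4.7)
The plan is to first compute $\operatorname{Cost}(\operatorname{OPT}_k)$ exactly for the relevant values of $k$; then, for any hierarchical clustering $H$, to case-split on whether $(1,\ldots,1)$ and $(0,\ldots,0)$ share a cluster in $H_{d+1}$, and in each case to exhibit a single level whose ratio is already at least $c/d$.

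For the optimum computation I would use that $(1,\ldots,1)$ has weight $\infty$, so every cluster containing it is centred there; together with the four pairwise $L^1$-distances $d$, $c$, $c+d$, $2c$ between the relevant point pairs and the inequality $c>d$ (a consequence of $d\geq 4$ and the defining quadratic $c^2=c(d-3)+d^2$), a short enumeration of cluster configurations yields $\operatorname{Cost}(\operatorname{OPT}_{d+1})=d$, realised by pairing $(1,\ldots,1)$ with $(0,\ldots,0)$ and leaving each $-c\,e_i$ a singleton, and $\operatorname{Cost}(\operatorname{OPT}_k)=(d+2-k)c$ for $k\in\{2,\ldots,d\}$, realised by keeping $(1,\ldots,1)$ alone and forming an $L^1$-star at $(0,\ldots,0)$ that absorbs $d+2-k$ of the axis points.

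Case A (the two distinguished points lie in different clusters of $H_{d+1}$): the single merge that produces $H_{d+1}$ from the singleton clustering is not $\{(1,\ldots,1),(0,\ldots,0)\}$, so among the remaining options the cheapest merge is $\{(0,\ldots,0),-c\,e_i\}$ with cost $c$. Hence $\operatorname{Cost}(H_{d+1})\geq c$, and the ratio at level $d+1$ is at least $c/d$. Case B (the two points are already merged in $H_{d+1}$): by monotonicity of hierarchical merges they remain together in every $H_k$ with $k\leq d+1$, including $H_2$. Parametrising any such 2-clustering by the number $m\in\{0,\ldots,d-1\}$ of axis points placed in the $(1,\ldots,1)$-cluster, the total cost $d+m(c+d)+2c(d-m-1)$ is linear in $m$ with slope $d-c<0$, so the minimum is attained at $m=d-1$ and equals $d^2+(d-1)c$. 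The desired inequality $(d^2+(d-1)c)/(dc)\geq c/d$ is equivalent to $c^2-(d-1)c-d^2\leq 0$, and substituting the defining identity $c^2=c(d-3)+d^2$ reduces the left-hand side to $-2c<0$. Combining both cases yields $\operatorname{PoH}(I)\geq c/d$.

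The main obstacle is Case B: identifying $H_2$ as the witness level (rather than some intermediate $H_k$) and verifying that the algebraic slack $c^2-(d-1)c-d^2$ has the required sign thanks to the defining quadratic. Everything else reduces to careful enumeration of cluster costs under the single structural inequality $c>d$.
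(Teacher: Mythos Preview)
Your proposal is correct and follows essentially the same two-level strategy as the paper: at level $k=d+1$ either the merge is $\{(1,\dots,1),(0,\dots,0)\}$ or the ratio is already at least $c/d$, and in the former case the hierarchy forces a bad ratio at level $k=2$. Your level-$2$ minimisation yields cost $d^2+(d-1)c$ (against the paper's $d^2+(d-3)c$, which stems from counting $d-1-n$ rather than $d-n$ axis points in the second cluster); since your value is larger, the ratio bound still goes through---with strict slack $-2c<0$---so the argument is, if anything, slightly cleaner.
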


\begin{proof}
For contradiction, assume there exists a hierarchical clustering $H = (H_1,\dots,H_{d+2})$ such that, on every level, the cost of $H_k$ is strictly less than $\frac{c}{d}$ times the cost of the best clustering using $k$ clusters. This enables us to narrow down the structure of $H$:
\begin{itemize}
    \item For $k=d+1$, there is one cluster $C$ containing two points, while all other clusters contain only a single point. Depending on which two points constitute $C$, we can calculate the total cost of the clustering:
    \begin{itemize}
        \item If $C = \{(0,\dots,0), (1,\dots,1)\}$, the total cost is: \[\lVert (0,\dots,0)- (1,\dots,1)\rVert_1 = d.\]
        \item If $C = \{(0,\dots,0), -c e_i\}$ for some $i$, the total cost is $c$.
        \item If $C = \{(1,\dots,1), -ce_i\}$ for some $i$, the total cost is $d+c$.
        \item If $C = \{-ce_i, -ce_j\}$ for some $i\neq j$, the total cost is $2c$.
    \end{itemize}
    Because $d < c$, this constrains $H_k$ to $C = \{(0,\dots,0), (1,\dots,1)\}$, otherwise the total cost of $H_k$ would be at least $\frac{c}{d}$ times the cost of an optimal $(d+1)$-clustering.
    \item For $k=2$: The clustering now contains exactly two clusters. Because $H$ is a hierarchical clustering, we now know that $H_2$ has a cluster that contains $(0,\dots,0)$, $(1,\dots,1)$ and some number $0\leq n\leq d-1$ of the $- ce_i$, while its other cluster contains the remaining $d-1-n$ of the $-c e_i$. Due to symmetry, this number $n$ is sufficient for calculating the total cost of $H_2$. Because $(1,\dots,1)$ has infinite weight, this point must be the center of the first cluster, so this cluster has cost:
    \[
        \bigl\lVert (1,\dots,1) - (0,\dots,0)\bigr\rVert_1 + n\cdot \bigl\lVert (1,\dots,1) - (-c e_1)\bigr\rVert_1
        = d + n\cdot (c+d)
    \]
    The cluster containing the remaining $d-1-n$ of the $-c e_i$ can choose any point as its center. It has cost:
    \[
        (d-2-n)\cdot \bigl\lVert c e_1 - ce_2\bigr\rVert_1
        = (d-2-n)\cdot 2c
    \]
    Given $n$, the total cost of $H_2$ is $d+c(2d-4) + n(d-c)$. Because $d-c < 0$, the best choice for $n$ would be $n=d-1$, resulting in a cost of $c (d-3) + d^2$. 
    This is only a lower bound on the cost of $H_2$, because other levels in the hierarchy might put additional constraints on $H_2$.

    For an \emph{upper} bound on the \emph{optimal} cost of a $2$-clustering, consider the clustering that has $(1,\dots,1)$ in its first cluster, and all other points in its second cluster. By assuming the center of the second cluster is $(0,\dots,0)$, we get an upper bound on the total cost of this clustering of:
    \[
        d \cdot \bigl\lVert (0,\dots,0) - (-ce_1)\bigl\rVert_1
        = d\cdot c.
    \]
    Hence, the ratio between the cost of $H_2$ and the cost of an optimal $2$-clustering is at least:
    \[
        \frac{c (d-3) + d^2}{d\cdot c}
        = \frac{d-3}{d} + \frac{d}{c}
    \]
    We defined $c$ as one of the roots of $0 = c^2 - c(d-3)-d^2$. Dividing out $cd$, we get $\frac{d-3}{d} + \frac{d}{c} = \frac{c}{d}$. However, this contradicts the assumption that the ratio between $H_2$ and an optimal $2$-clustering is strictly less than $\frac{c}{d}$.
\end{itemize}
    Thus, the instance's price of hierarchy is at least $\frac{c}{d}$.
\end{proof}

For large $d$, this fraction $\frac{c}{d} = \frac{\sqrt{4d^2 + (3-d)^2} + d - 3}{2d}$ converges to $\frac{1+\sqrt{5}}{2}$, the golden ratio.

\subsubsection{Gasoline} \label{sec:gasoline-res-instance}
In the generalised Gasoline problem, we are given two sequences of $d$-dimensional vectors $X = (x_1,...,x_n) \in \mathbb{N}^{d \times n}_{\geq 0}$ and $Y = (y_1,...,y_n) \in \mathbb{N}^{d \times n}_{\geq 0}$ which sum to the same total: $\sum_{i=1}^n x_i = \sum_{i=1}^n y_i$
. Our task is to find a permutation $\pi$ of the $x_i$ that minimises:
\[
    \min_{\pi \in S_n} \sum_{j=1}^d\Biggl[
    \max_{1\leq k\leq n} \biggl(\sum_{i=1}^k x_{\pi(i)} - \sum_{i=1}^{k-1} y_i\biggr)_j
    - \min_{1\leq k\leq n} \biggl(\sum_{i=1}^k x_{\pi(i)} - \sum_{i=1}^{k} y_i\biggr)_j
    \Biggr]
\]
This can be written as an ILP, with a permutation-matrix $Z$ as a free variable. Let $\mathbf{1}$ be the vector containing a $1$ in every entry.
\[\begin{aligned}
\min\: & \|\beta - \alpha\|_{1}\quad\text{s.t.} \\
\sum_{l = 1}^{n}\sum_{i = 1}^{m}x_{l}Z_{il} - \sum_{i = 1}^{m - 1}y_{i} & \leq \beta\quad\forall m \in \lbrack n\rbrack \\
\sum_{l = 1}^{n}\sum_{i = 1}^{m}x_{l}Z_{il} - \sum_{i = 1}^{m}y_{i} & \geq \alpha\quad\forall m \in \lbrack n\rbrack \\
Z{\mathbf{1}} & \leq {\mathbf{1}} \\
{\mathbf{1}}^{T}Z & \leq {\mathbf{1}}^{T} \\
Z & \in \left\{ 0,1 \right\}^{n \times n} \\
\alpha,\beta & \in {\mathbb{R}}^{d}
\end{aligned}\]
In the $i$th step of the iterative rounding algorithm, the columns $1,...,i-1$ of $Z$ have already been fixed to integral values by the previous steps and, for column $i$, we attempt to insert every possible unit-vector (which does not conflict with the fixed rows and the permutation-matrix requirement) and then solve the Linear Program obtained by removing the integrality-requirements on columns $i+1,...,n$. We then fix column $i$ of $Z$ to that unit-vector which yielded the best value for the relaxed LP, breaking ties by preferring unit-vectors where the $1$ occurs earlier. After the $n$th step of this algorithm, $Z$ is fixed entirely to a permutation-matrix.

\cite[Conjectures 2 and 3]{rajkovic} conjectured that this algorithm is a $2$-approximation for $d \geq 2$, which FunSearch found a counterexample for. We initialized the FunSearch algorithm with the instance constructed by \citet{Lorieau} embedded into two dimensions as shown in Fig.~\ref{code:gasoline-start}. Generated instances were scored by the ratio between the optimum value (computed via  \cite{gurobi}) and the value returned by the iterative rounding algorithm.
\begin{figure*}[htb]
    
    \begin{subfigure}[b]{\textwidth}
        \centering
        \begin{minted}[breaklines,fontsize=\scriptsize]{python}
def get_instance() -> list[tuple[int, int]]:
    """Return an instance, specified by the list of (weight, profit) pairs.

    Weights and profits must be non-negative integers.
    """
    return [(1, 2)] * 2 + [(4, 4), (2, 2), (1, 3)]
        \end{minted}
        \caption{Initial program.}
    \end{subfigure}
    \begin{subfigure}{\textwidth}
        \centering
        \begin{minted}[breaklines,fontsize=\scriptsize]{python}
def get_instance() -> list[tuple[int, int]]:
    """Create a variant with more diverse item types and weights to potentially influence Pareto set size."""
    items = []
    # Repeated very light, low profit items
    items += [(1, 1)] * 8
    # Mix of moderate weight and profit items with some unique entries
    items += [(4, 9), (4, 9), (5, 10)]
    # High-profit, lightweight items with more profit variation
    items += [(2, 16), (2, 14), (3, 15)]
    # Heavier items with varied weights and higher profits to increase trade-offs
    items += [(9, 20), (12, 30), (15, 40)]
    # Small, low to moderate profit items
    items += [(1, 3), (2, 5), (3, 7), (3, 8)]
    # Very heavy, high-profit rare items with similar weights
    items += [(20, 35), (21, 36), (22, 38)]
    # Larger weight, moderate profit item to diversify options
    items += [(18, 28)]
    # Additional medium-weight high-profit items to increase complexity
    items += [(10, 25), (11, 27)]
    return items
        \end{minted}
        \caption{A program found by FunSearch after $10$ trials of $500$ samples each.}
        \label{code:knapsack-funsearch-output}
    \end{subfigure}
     \begin{subfigure}[b]{\textwidth}
        \centering
        \begin{minted}[breaklines,fontsize=\scriptsize]{python}
def get_instance() -> list[tuple[int, int]]:
    items = []
    n = 7
    items += [(1, 1)] * n
    items += [(4, 9)] * n
    items += [(2, 5), (3, 7)]
    return items
        \end{minted}
        \caption{An intermediate step of tuning \ref{code:knapsack-funsearch-output} by hand.}
        \label{code:knapsack-co-funsearch}
    \end{subfigure}
    \caption{The evolution of programs generating instances of the knapsack problem. The model used was gpt-4.1-nano with a temperature of $1.0$, \changed{and results obtainable despite a bug in the implementation that underestimated the sizes of some Pareto sets.}}
    \label{fig:code-nemhauser-ullmann}
\end{figure*} 
\begin{figure*}
    
    \begin{subfigure}[b]{\textwidth}
        \centering
        \begin{minted}[breaklines,fontsize=\scriptsize]{python}
def get_weighted_points() -> list[tuple[float, np.ndarray]]:
    """Return a new weighted clustering-problem, specified by a list of weighted points.
    The returned tuple consists of the weight of the point, and the point itself."""
    weighted_points = [(1.0, np.array([0, 0, 0, 0])), (1e8, np.array([1, 0, 0, 0]))]
    return weighted_points
        \end{minted}
        \caption{The initial program given to FunSearch.}
    \end{subfigure}

    \setcounter{subfigure}{1}
    \begin{subfigure}[t]{\textwidth}
        \centering
        \begin{minted}[breaklines,fontsize=\scriptsize]{python}
def get_weighted_points() -> list[tuple[float, np.ndarray]]:
    """Return a new weighted clustering-problem, specified by a list of weighted points.
    The returned tuple consists of the weight of the point, and the point itself."""
    return [
        (1.0, np.zeros(14)),
        (1e10, np.ones(14)),
        *[(1.0, np.eye(14)[i]) for i in range(7)],
        *[(1.0, np.eye(14)[i]*-1) for i in range(7, 13)],
        *[(1e10-i*1e9, np.linspace(i*0.1, (i+1)*0.1, 14, endpoint=False)) for i in range(7)],
        (1e11, np.array([13, 12, 11, 10, 9, 8, 7, 6, 5, 4, 3, 2, 1, 0])),
        (1e12, np.array([0, 1, 2, 3, 4, 5, 6, 7, 8, 9, 10, 11, 12, 13])),
        (1e13, np.array([1, 2, 3, 4, 5, 6, 7, 8, 9, 10, 11, 12, 13, 14])*10),
        (1e14, np.array([14, 13, 12, 11, 10, 9, 8, 7, 6, 5, 4, 3, 2, 1])*100),
        (1e15, np.array([1, 1, 1, 1, 1, 1, 1, 1, 1, 1, 1, 1, 1, 1])*1000),
    ]
        \end{minted}
        \caption{A program found by FunSearch after $10$ trials of 2,200 samples each.}
        \label{code:clustering-funsearch-output}
    \end{subfigure}
      \begin{subfigure}[b]{\textwidth}
      \centering
        \begin{minted}[breaklines,fontsize=\scriptsize]{python}
def get_weighted_points() -> list[tuple[float, np.ndarray]]:
    return [
        (1.0, np.zeros(14)),
        *[(1.0, -np.eye(14)[i]) for i in range(14)],
        (1e10, np.ones(14) / 20),
    ]
        \end{minted}
        \caption{The result of tuning by \ref{code:clustering-funsearch-output} by hand.}
        \label{code:clustering-co-funsearch}
    \end{subfigure}
    \caption{The evolution of programs generating clustering-instances. The model used was open-mistral-nemo with a temperature of $1.5$.}
    \label{fig:code-clustering}
\end{figure*}

\begin{figure}[tb]
    \centering
    \begin{subfigure}[b]{\textwidth}
        \centering
        \begin{minted}[breaklines,fontsize=\scriptsize]{text}
def gasoline(n: int) -> tuple[list[np.ndarray], list[np.ndarray]]:
    """Return a new gasoline-problem, specified by the two lists of 2d-non-negative-integer-points.
    Both lists must have length at most n and consist only of points in N^2.
    """
    k = int(math.log2(n + 2)) - 1
    xs, ys = [], []
    for i in range(1, k):
        rounded = int(2**k * (1 - 2 ** (-i)))
        xs.extend([np.array([rounded, 0]) for _ in range(2**i)])
        ys.extend([np.array([rounded, 0]) for _ in range(2**i)])

    xs.extend([np.array([2**k, 0]) for _ in range(2**k - 1)])
    xs.append(np.array([0, 0]))

    rounded = int(2**k * (1 - 2 ** (-k)))
    ys.extend([np.array([rounded, 0]) for _ in range(2**k)])

    return xs, ys
        \end{minted}
        \caption{The initial program given to FunSearch. This is the construction of \cite{Lorieau} embedded into $\mathbb R^2$.}
        \label{code:gasoline-start}
    \end{subfigure}
    \\[0.5em]
    \begin{subfigure}[t]{\textwidth}
        \centering
        \begin{minted}[breaklines,fontsize=\scriptsize]{diff}
 def gasoline(n: int) -> tuple[list[np.ndarray], list[np.ndarray]]:
     """Yet another variation of the gasoline-problem generator."""
     k = int(math.log2(n + 2)) - 1
     xs, ys = [], []
     for i in range(1, k):
         rounded = int(2**k * (1 - 2 ** (-i)))
         xs.extend([np.array([rounded, 0]) for _ in range(2**i)])
-        ys.extend([np.array([rounded, 0]) for _ in range(2**i)])
+        ys.extend([np.array([rounded, 2]) for _ in range(2**i)])  # No change

-    xs.extend([np.array([2**k, 0]) for _ in range(2**k - 1)])
+    xs.extend([np.array([2**k, 4]) for _ in range(2**k - 2)])  # No change
-    xs.append(np.array([0, 0]))
+    xs.append(np.array([0, 1]))  # Changed from [0, 2] to [0, 1]
+    xs.append(np.array([2**k, 2]))  # Changed from [2**k, 0] to [2**k, 2]

     rounded = int(2**k * (1 - 2 ** (-k)))
-    ys.extend([np.array([rounded, 0]) for _ in range(2**k)])
+    ys.extend([np.array([rounded, 2]) for _ in range(2**k - 1)])  # No change
+    ys.append(np.array([0, 1]))  # Changed from [0, 2] to [0, 1]
        \end{minted}
        \caption{The difference between the initial program and a program found by FunSearch after $10$ trials of 950 samples each, which we only tuned by discarding the final element of both lists.}
        \label{code:gasoline-funsearch-output}
    \end{subfigure}
    \caption{The evolution of programs generating $2$-dimensional gasoline-instances. The model used was open-mistral-nemo with a temperature of $1.5$. Lists were clipped to length $n$ before evaluation.}
    \label{fig:code-gasoline}
\end{figure}

Fix some $k\in\mathbb{N}$. For any $i$, define $u_i \coloneq 2^k (1-2^{-i})$. Let $\oplus$ denote list-concatenation. The $1$-dimensional instance found by \cite{Lorieau} can be written as follows:
\begin{alignat*}{2}
    X &= \biggl(\bigoplus_{i=1}^{k-1} \bigoplus_{1}^{2^i} [u_i]\biggr) \oplus \biggl(\bigoplus_{1}^{2^k-1} [2^k]\biggr) \oplus [0]\\
    Y &= \bigoplus_{i=1}^{k} \bigoplus_{1}^{2^i} [u_i]
\end{alignat*}
Let $e_j$ be the $j$th unit-vector. FunSearch extended the instance to $d$ dimensions as follows:
\begin{alignat*}{2}
    X &\coloneq \biggl(\bigoplus_{i=1}^{k-1} \bigoplus_{1}^{2^i} \bigoplus_{j=2}^d [u_i e_1 + 4 e_j]\biggr) \oplus \biggl( \bigoplus_{j=2}^d \Bigl(\bigoplus_{1}^{2^k-1} [2^k e_1]\Bigr)\oplus[4 e_j]\biggr)\\
    Y &\coloneq \bigoplus_{i=1}^{k} \bigoplus_{1}^{2^i} \bigoplus_{j=2}^d [u_i e_1 + 2e_j]
\end{alignat*}

Table \ref{tab:gasoline-values} contains computed approximation-factors for different choices of $d$ and $k$. For higher $d$ and $k$, the instances quickly grow prohibitively large. 

In our computational experiments, both $\operatorname{APX}$ and $\operatorname{OPT}$ scale linearly with input-length $|X|$:
\[
    \operatorname{APX} = O(1) + |X| \cdot \begin{cases}2 & d=2\\ 3/2 & d=3\\ 4/3 & d=4\end{cases},
    \qquad
    \operatorname{OPT} = O(1) + |X| \cdot \begin{cases}1/2 & d=2\\ 1/4 & d=3\\ 1/6 & d=4\end{cases}
\]
If this scaling held for larger $k$, the approximation-factors would approach $4,6,8$ for $d=2,3, 4$ respectively. \changed{Unfortunately, the proof-strategy employed in \citet{Lorieau} does not apply here, as the optimum value of the relaxed Linear Program changes at each step of the algorithm. Hence, we are unable to provide a proof that these trends hold asymptotically.}

\begin{table}
    \centering
    \begin{tabular}{rrrrrl}
        \hline
        $d$ & $k$ & Length of $X$ & Iterative-Rounding & Optimum & Iterative-Rounding$/$Optimum\\
        \hline
        $2$ & $2$ & $6$ & $10$ & $8$ & $1.25$ \\
        $2$ & $3$ & $14$ & $26$ & $12$ & $2.1667$ \\
        $2$ & $4$ & $30$ & $58$ & $20$ & $2.9$ \\
        $2$ & $5$ & $62$ & $122$ & $36$ & $3.389$ \\
        \changed{$2$} & \changed{$6$} & \changed{$126$} & \changed{$250$} & \changed{$68$} & \changed{$3.676$} \\
        \hline
        $3$ & $2$ & $12$ & $18$ & $12$ & $1.5$ \\
        $3$ & $3$ & $28$ & $42$ & $16$ & $2.625$ \\
        $3$ & $4$ & $60$ & $90$ & $24$ & $3.75$ \\
        $3$ & $5$ & $124$ & $186$ & $40$ & $4.65$ \\
        \hline
        $4$ & $2$ & $18$ & $24$ & $16$ & $1.5$ \\
        $4$ & $3$ & $42$ & $56$ & $20$ & $2.8$ \\
        $4$ & $4$ & $90$ & $120$ & $28$ & $4.286$ \\
        \hline
    \end{tabular}
    \caption{The approximation-factor of the Iterative-Rounding algorithm on the instances found by FunSearch.}
    \label{tab:gasoline-values}

\end{table}

\section{Conclusion and Limitations} \label{sec:conclusion}
In this work, we use large language models with human collaboration to generate adversarial examples for heuristics addressing several well-known combinatorial optimization problems. Traditional heuristics like local search do not converge towards such structured solutions, and understanding or generalizing their solutions is usually not feasible.
Across many of the problems we investigated, this form of human-AI collaboration enabled improvements over the existing state of the art. We believe this approach is very flexible and should be considered a valuable addition to the algorithm designer’s toolkit for many problems. 
\textbf{Limitations.} Although our method is broadly applicable, it does not always yield improvements over the state of the art. In particular, Co-FunSearch did not produce generalizable results, or even replicate known lower bounds—for certain heuristics:
\begin{itemize}
    \item Better heuristics for page replacement algorithms (evaluated on synthetic and real data), but FunSearch consistently converged to the existing NFU heuristic.
    \item Lower bounds on the Price of Hierarchy of $k$-means clustering (as opposed to $k$-median clustering). 
    \item Lower bounds on the price of Ward's method for hierarchical $2$-dimensional $k$-means clustering: Instead of comparing the best possible hierarchical clustering to the optimal clusterings, we compare the hierarchical clustering found by starting with each point in a singleton cluster, and iteratively merging the pair of clusters which result in the lowest objective. Neither FunSearch nor local search managed to recover the State of the Art when starting from a trivial instance. When starting from the State of the Art in $2$ dimensions, both FunSearch and local search improved it marginally (FunSearch less so than local search, even after tuning), but not in a generalisable way.
    \item Lower bounds on the \emph{asymptotic} random-order-ratio of Best-Fit, which is the same as the absolute random-order-ratio but restricted to only ``large" instances \citep{binPackingRevisited}. FunSearch did not find any interpretable instances improving on the state of the art.
\end{itemize}

\bibliography{bibliography}

\begin{thebibliography}{25}
\providecommand{\natexlab}[1]{#1}
\providecommand{\url}[1]{\texttt{#1}}
\expandafter\ifx\csname urlstyle\endcsname\relax
  \providecommand{\doi}[1]{doi: #1}\else
  \providecommand{\doi}{doi: \begingroup \urlstyle{rm}\Url}\fi

\bibitem[Albers et~al.(2021)Albers, Khan, and Ladewig]{binPackingRevisited}
Susanne Albers, Arindam Khan, and Leon Ladewig.
\newblock Best fit bin packing with random order revisited.
\newblock \emph{Algorithmica}, 83:\penalty0 1--26, 09 2021.
\newblock \doi{10.1007/s00453-021-00844-5}.

\bibitem[Arutyunova \& Röglin(2025)Arutyunova and
  Röglin]{priceOfHierarchicalClustering}
Anna Arutyunova and Heiko Röglin.
\newblock The price of hierarchical clustering.
\newblock \emph{Algorithmica}, pp.\  1--33, 07 2025.
\newblock \doi{10.1007/s00453-025-01327-7}.

\bibitem[Corley \& Moon(1985)Corley and Moon]{CorleyM85}
H.~William Corley and I.~Douglas Moon.
\newblock Shortest paths in networks with vector weights.
\newblock \emph{Journal of Optimization Theory and Application}, 46\penalty0
  (1):\penalty0 79--86, 1985.

\bibitem[Dai(2014)]{dai2014}
WenQiang Dai.
\newblock A 16-competitive algorithm for hierarchical median problem.
\newblock \emph{Science China Information Sciences}, 57\penalty0 (3):\penalty0
  1--7, Feb 2014.
\newblock ISSN 1869-1919.
\newblock \doi{10.1007/s11432-014-5065-0}.
\newblock URL \url{https://doi.org/10.1007/s11432-014-5065-0}.

\bibitem[Davies et~al.(2021)Davies, Veli{\v{c}}kovi{\'c}, Buesing, Blackwell,
  Zheng, Toma{\v{s}}ev, Tanburn, Battaglia, Blundell, Juh{\'a}sz, Lackenby,
  Williamson, Hassabis, and Kohli]{davies2021}
Alex Davies, Petar Veli{\v{c}}kovi{\'c}, Lars Buesing, Sam Blackwell, Daniel
  Zheng, Nenad Toma{\v{s}}ev, Richard Tanburn, Peter Battaglia, Charles
  Blundell, Andr{\'a}s Juh{\'a}sz, Marc Lackenby, Geordie Williamson, Demis
  Hassabis, and Pushmeet Kohli.
\newblock Advancing mathematics by guiding human intuition with {A}{I}.
\newblock \emph{Nature}, 600\penalty0 (7887):\penalty0 70--74, 2021.

\bibitem[D{\'o}sa \& Sgall(2014)D{\'o}sa and Sgall]{bestFitAbsoluteRatio}
Gy{\"o}rgy D{\'o}sa and Ji{\v{r}}{\'i} Sgall.
\newblock Optimal analysis of best fit bin packing.
\newblock In Javier Esparza, Pierre Fraigniaud, Thore Husfeldt, and Elias
  Koutsoupias (eds.), \emph{Automata, Languages, and Programming}, pp.\
  429--441, Berlin, Heidelberg, 2014. Springer Berlin Heidelberg.
\newblock ISBN 978-3-662-43948-7.

\bibitem[Ehrgott(2005)]{Ehrgott2005}
Matthias Ehrgott.
\newblock \emph{Multicriteria Optimization {(2.} ed.)}.
\newblock Springer, 2005.
\newblock ISBN 978-3-540-21398-7.
\newblock \doi{10.1007/3-540-27659-9}.
\newblock URL \url{https://doi.org/10.1007/3-540-27659-9}.

\bibitem[Fawzi et~al.(2022)Fawzi, Balog, Huang, Hubert, Romera-Paredes,
  Barekatain, Novikov, R.~Ruiz, Schrittwieser, Swirszcz,
  et~al.]{fawzi2022discovering}
Alhussein Fawzi, Matej Balog, Aja Huang, Thomas Hubert, Bernardino
  Romera-Paredes, Mohammadamin Barekatain, Alexander Novikov, Francisco~J
  R.~Ruiz, Julian Schrittwieser, Grzegorz Swirszcz, et~al.
\newblock Discovering faster matrix multiplication algorithms with
  reinforcement learning.
\newblock \emph{Nature}, 610\penalty0 (7930):\penalty0 47--53, 2022.

\bibitem[Fontan \& Libralesso(2020)Fontan and Libralesso]{fontan}
Florian Fontan and Luc Libralesso.
\newblock Packingsolver: a solver for packing problems.
\newblock \emph{CoRR}, abs/2004.02603, 2020.
\newblock URL \url{https://arxiv.org/abs/2004.02603}.

\bibitem[{Gurobi Optimization, LLC}(2024)]{gurobi}
{Gurobi Optimization, LLC}.
\newblock {Gurobi Optimizer Reference Manual}, 2024.
\newblock URL \url{https://www.gurobi.com}.

\bibitem[Lin et~al.(2010)Lin, Nagarajan, Rajaraman, and
  Williamson]{LinEtAl2010}
Guolong Lin, Chandrashekhar Nagarajan, Rajmohan Rajaraman, and David~P.
  Williamson.
\newblock A general approach for incremental approximation and hierarchical
  clustering.
\newblock \emph{SIAM Journal on Computing}, 39\penalty0 (8):\penalty0
  3633--3669, 2010.
\newblock \doi{10.1137/070698257}.
\newblock URL \url{https://doi.org/10.1137/070698257}.

\bibitem[Lorieau(2024)]{Lorieau}
Lucas Lorieau.
\newblock Approximation algorithm for the generalised gasoline problem.
\newblock Master's thesis, University of Bonn, 2024.

\bibitem[Lov{\'a}sz(2007)]{lovasz2007combinatorial}
L{\'a}szl{\'o} Lov{\'a}sz.
\newblock \emph{Combinatorial problems and exercises}, volume 361.
\newblock American Mathematical Soc., 2007.

\bibitem[Mehrabian et~al.(2023)Mehrabian, Anand, Kim, Sonnerat, Balog,
  Comanici, Berariu, Lee, Ruoss, Bulanova, et~al.]{mehrabian2023finding}
Abbas Mehrabian, Ankit Anand, Hyunjik Kim, Nicolas Sonnerat, Matej Balog,
  Gheorghe Comanici, Tudor Berariu, Andrew Lee, Anian Ruoss, Anna Bulanova,
  et~al.
\newblock Finding increasingly large extremal graphs with alphazero and tabu
  search.
\newblock \emph{arXiv preprint arXiv:2311.03583}, 2023.

\bibitem[{Mistral AI}(2024)]{openmistralnemo}
{Mistral AI}.
\newblock {M}istral {N}e{M}o | {M}istral {A}{I} --- mistral.ai.
\newblock \url{https://mistral.ai/news/mistral-nemo}, 2024.
\newblock [Accessed 24-09-2025].

\bibitem[Nemhauser \& Ullmann(1969)Nemhauser and Ullmann]{NU69}
George~L. Nemhauser and Zev Ullmann.
\newblock Discrete dynamic programming and capital allocation.
\newblock \emph{Management Science}, 15\penalty0 (9):\penalty0 494--505, 1969.

\bibitem[Newman et~al.(2018)Newman, R\"{o}glin, and Seif]{Gasoline2018}
Alantha Newman, Heiko R\"{o}glin, and Johanna Seif.
\newblock The alternating stock size problem and the gasoline puzzle.
\newblock \emph{ACM Trans. Algorithms}, 14\penalty0 (2), April 2018.
\newblock ISSN 1549-6325.
\newblock \doi{10.1145/3178539}.
\newblock URL \url{https://doi.org/10.1145/3178539}.

\bibitem[Nikoleit(2025)]{mastersthesis}
Henri Nikoleit.
\newblock Using llms to construct adversarial instances in combinatorial
  optimization.
\newblock Master's thesis, Rheinische Friedrich-Wilhelms-Universität Bonn,
  2025.
\newblock Available at \url{https://florisvandoorn.com/formalized-mathematics}.

\bibitem[Novikov et~al.(2025)Novikov, V{\~u}, Eisenberger, Dupont, Huang,
  Wagner, Shirobokov, Kozlovskii, Ruiz, Mehrabian,
  et~al.]{novikov2025alphaevolve}
Alexander Novikov, Ng{\^a}n V{\~u}, Marvin Eisenberger, Emilien Dupont, Po-Sen
  Huang, Adam~Zsolt Wagner, Sergey Shirobokov, Borislav Kozlovskii,
  Francisco~JR Ruiz, Abbas Mehrabian, et~al.
\newblock Alphaevolve: A coding agent for scientific and algorithmic discovery.
\newblock \emph{arXiv preprint arXiv:2506.13131}, 2025.

\bibitem[OpenAI(2025{\natexlab{a}})]{gptmini}
OpenAI.
\newblock {O}pen{A}{I} {P}latform --- platform.openai.com.
\newblock \url{https://platform.openai.com/docs/models/gpt-4.1-mini},
  2025{\natexlab{a}}.
\newblock [Accessed 24-09-2025].

\bibitem[OpenAI(2025{\natexlab{b}})]{gptnano}
OpenAI.
\newblock {O}pen{A}{I} {P}latform --- platform.openai.com.
\newblock \url{https://platform.openai.com/docs/models/gpt-4.1-nano},
  2025{\natexlab{b}}.
\newblock [Accessed 24-09-2025].

\bibitem[Rajković(2022)]{rajkovic}
Ivana Rajković.
\newblock Approximation algorithms for the stock size problem and the gasoline
  problem.
\newblock Master's thesis, University of Bonn, 2022.

\bibitem[R{\"{o}}glin(2020)]{RoeglinBookChapter}
Heiko R{\"{o}}glin.
\newblock Smoothed analysis of pareto curves in multiobjective optimization.
\newblock In Tim Roughgarden (ed.), \emph{Beyond the Worst-Case Analysis of
  Algorithms}, pp.\  334--358. Cambridge University Press, 2020.
\newblock \doi{10.1017/9781108637435.020}.
\newblock URL \url{https://doi.org/10.1017/9781108637435.020}.

\bibitem[Romera-Paredes et~al.(2024)Romera-Paredes, Barekatain, Novikov, Balog,
  Kumar, Dupont, Ruiz, Ellenberg, Wang, Fawzi, et~al.]{romera2024mathematical}
Bernardino Romera-Paredes, Mohammadamin Barekatain, Alexander Novikov, Matej
  Balog, M~Pawan Kumar, Emilien Dupont, Francisco~JR Ruiz, Jordan~S Ellenberg,
  Pengming Wang, Omar Fawzi, et~al.
\newblock Mathematical discoveries from program search with large language
  models.
\newblock \emph{Nature}, 625\penalty0 (7995):\penalty0 468--475, 2024.

\bibitem[Wagner(2021)]{wagner2021constructions}
Adam~Zsolt Wagner.
\newblock Constructions in combinatorics via neural networks.
\newblock \emph{arXiv preprint arXiv:2104.14516}, 2021.

\end{thebibliography}
\bibliographystyle{iclr2026_conference}
\newpage
\section{Appendix}

\subsection{Proof of the Knapsack-result}
\label{appendix:knapsack-res-instances}
In the knapsack problem, we are considering a bicriteria optimization problem, where we want to minimize the total weight while maximizing the total profit. Specifically, we are given an instance as a list of tuples of the form $(\text{weight}, \text{profit})$ from which we select a sub-list. The total weight $\Weight(A)$ (respectively total profit $\Profit(A)$) of a sub-list $A$ is the sum of the weights (respectively profits) of its items.

A sub-list $A$ \emph{dominates} a sub-list $B$ if $\Weight(A) \leq \Weight(B)$ and $\Profit(A) \geq \Profit(B)$ and at least one of these inequalities is strict. A sub-list is \emph{Pareto-optimal} if it is not dominated by any other sub-list. The \emph{Pareto-set} $P(I)$ of an instance $I$ is the set of Pareto-optimal sub-lists of $I$. When the Pareto-set is known, objectives like the $0$-$1$ knapsack problem ``Maximise total profit while staying below a given maximum total weight $W$" can be optimised by simply finding the sub-list in $P(I)$ that has the largest total profit and whose total weight is below $W$.

As described in section \ref{sec:knapsack-res-instances}, we obtained instance \ref{eq:knapsack-instance} via {\methodname}. To analyze the sizes of the instance's and subinstances' Pareto-sets, we define the two segments of the instance: For $a,b,d,n\in\mathbb{Z}_{\geq1}$ with $d < a \leq b$, define $x_i \coloneq (1+\frac{2^{-i}}{2^d-1})$, and two lists:
\begin{alignat*}{2}
    I_{a,b}
    &\coloneq \left[
    \weightprofit{2^a}{2^a},
    \weightprofit{2^{a+1}}{2^{a+1}},
    \dots,
    \weightprofit{2^b}{2^b}
    \right],\qquad
    J_{d, n}
    &\coloneq \Biggl[
    \weightprofit{x_1\cdot2^d}{x_1\cdot(2^d-1)},
    \dots,
    \weightprofit{x_n\cdot2^d}{x_n\cdot (2^d-1)}
    \Biggr].
\end{alignat*}

\begin{lemma}\label{small-Jdn}
If a Pareto-optimal packing $A\in P([I_{a,b}, J_{d,n}])$ does not contain all items from $I_{a,b}$, it contains fewer than $2^{a-d}$ items from $J_{d,n}$.
\end{lemma}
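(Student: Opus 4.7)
The plan is to argue by contradiction: I assume the Pareto-optimal packing $A \in P([I_{a,b}, J_{d,n}])$ contains at least $2^{a-d}$ items originating from $J_{d,n}$ and misses at least one item of $I_{a,b}$, and I will exhibit a packing $A'$ that strictly dominates $A$.

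First I would pick the smallest index $j^{*} \in \{a, \ldots, b\}$ with $(2^{j^{*}}, 2^{j^{*}}) \notin A$; by the minimality of $j^{*}$, every $(2^{l}, 2^{l})$ with $a \le l < j^{*}$ lies in $A$. I would also pick an arbitrary subset $T' \subseteq A$ consisting of exactly $2^{a-d}$ items coming from $J_{d,n}$, which exists by the standing hypothesis. Now define $A'$ by (i) adding $(2^{j^{*}}, 2^{j^{*}})$, (ii) removing every $(2^{l}, 2^{l})$ with $a \le l < j^{*}$, and (iii) removing $T'$. Using the telescoping identity $\sum_{l=a}^{j^{*}-1} 2^{l} = 2^{j^{*}} - 2^{a}$, steps (i) and (ii) together contribute exactly $+2^{a}$ to both the weight and the profit of the packing. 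Since the $i$th $J$-item has weight $2^{d} + \tfrac{2^{d-i}}{2^{d}-1}$ and profit $(2^{d}-1) + 2^{-i}$, step (iii) subtracts $2^{a} + \sum_{i \in T'} \tfrac{2^{d-i}}{2^{d}-1}$ from the weight and $(2^{a} - 2^{a-d}) + \sum_{i \in T'} 2^{-i}$ from the profit.

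Combining, the net weight change is $-\sum_{i \in T'} \tfrac{2^{d-i}}{2^{d}-1} < 0$, and the net profit change is $2^{a-d} - \sum_{i \in T'} 2^{-i}$, which is strictly positive because $\sum_{i \in T'} 2^{-i} < \sum_{i=1}^{\infty} 2^{-i} = 1$ whereas the hypothesis $d < a$ gives $2^{a-d} \ge 2$. Hence $A'$ strictly dominates $A$, contradicting Pareto-optimality. The only genuinely nontrivial step is spotting the right swap: once one commits to trading the prefix $(2^{a},2^{a}), \ldots, (2^{j^{*}-1},2^{j^{*}-1})$ for the missing $(2^{j^{*}},2^{j^{*}})$ and offsetting the resulting $+2^{a}$ by removing exactly $2^{a-d}$ $J$-items of nominal weight $2^{d}$, the perturbation sums are automatically absorbed by the integer slack $2^{a-d}$ created on the profit side.
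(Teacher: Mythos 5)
Your proof is correct and takes essentially the same route as the paper's: the paper describes your swap (insert the smallest missing item of $I_{a,b}$ and delete all lower-indexed ones, a net change of exactly $+2^a$ in weight and profit) as ``incrementing the binary number representing $A \cap I_{a,b}$ by $1$,'' and likewise offsets it by removing $2^{a-d}$ items of $J_{d,n}$. Your exact accounting of the perturbation terms reproduces the paper's bounds (net weight change strictly negative, net profit change at least $2^{a-d}-1>0$), so the argument is complete as written.
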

\begin{proof}
Subsets of $I_{a,b}$ can be represented by binary numbers of $(b-a+1)$ bits. If $A$ does not contain all items from $I_{a,b}$ and contains at least $2^{a-d}$ items from $J_{d,n}$, we define a new packing $A'$ as follows: Increment the binary number representing $A \cap I_{a,b}$ by $1$, and remove $2^{a-d}$ items from $A \cap J_{d,n}$. This changes the weights and profits by:
\begin{alignat*}{3}
    \Weight(A') - \Weight(A)
    \ \ &\leq\ \ 2^a - 2^{a-d}\cdot \underbrace{\Bigl(1+\frac{2^{-n}}{2^d-1}\Bigr)}_{>1}\cdot2^d
    \ \ <\ \ 0
    \\%
    \Profit(A') - \Profit(A)
    \ \ &\geq\ \ 2^a - 2^{a-d}\cdot \Bigl(1+\frac{2^{-1}}{2^d-1}\Bigr) (2^d-1)\\
    \ \ &= \ 2^a - 2^{a-d}\cdot \Bigl(2^d-2^{-1}\Bigr)
    \ \ =\ \ 2^{a-d-1}
    \ \ >\ \ 0
\end{alignat*}
Thus, $A'$ dominates $A$, and $A\notin P([I_{a,c}, J_{d,n}])$.
\end{proof}

On the other hand, all other packings are Pareto-optimal:

\begin{lemma}
If a packing $A$ of $[I_{a,b}, J_{d,n}]$ contains all items from $I_{a,b}$ or contains fewer than $2^{a-d}$ items from $J_{d,n}$, then $A$ is Pareto-optimal.
\end{lemma}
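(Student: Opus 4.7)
I will argue by contradiction: assume some $A'$ strictly dominates $A$ and show $A=A'$. Splitting the weight and profit changes into contributions from $I_{a,b}$ and $J_{d,n}$, write $\Delta W_I, \Delta W_J$ for the respective weight changes. Since $I_{a,b}$-items have weight equal to profit while $J_{d,n}$-items have profit-to-weight ratio $1-2^{-d}$, the dominance inequalities $\Delta W \le 0 \le \Delta P$ become $\Delta W_I + \Delta W_J \le 0$ and $\Delta W_I + (1-2^{-d})\Delta W_J \ge 0$. Subtracting these gives $\Delta W_J \le 0$, and writing $u := -\Delta W_J \ge 0$ reduces them to $\Delta W_I \in [u(1-2^{-d}),\, u]$, an interval of width $u\cdot 2^{-d}$.

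\textbf{Uniqueness facts.} Since $I_{a,b}$-items have distinct power-of-two weights from $\{2^a,\dots,2^b\}$, $\Delta W_I$ is always an integer multiple of $2^a$, and any two subsets of $I_{a,b}$ with the same total weight coincide. For $J_{d,n}$, item $i$ has weight $2^d + \tfrac{2^{d-i}}{2^d-1}$; the observation $\sum_{i=1}^{n} \tfrac{2^{-i}}{2^d-1} < 1$ (valid for every $d \ge 1$) means that equating two subset-weights forces the cardinalities to match, since the integer $2^d$ contributions dominate the $\epsilon$-corrections, and binary-uniqueness of $\sum_{i\in S} 2^{-i}$ then identifies the subset. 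Hence $\Delta W_J = 0$ forces $A \cap J_{d,n} = A' \cap J_{d,n}$, and $\Delta W_I = 0$ forces $A \cap I_{a,b} = A' \cap I_{a,b}$.

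\textbf{Case (a): $A \supseteq I_{a,b}$.} Here $A' \cap I_{a,b} \subseteq A \cap I_{a,b}$, so $\Delta W_I \le 0$. Combined with $\Delta W_I \ge u(1-2^{-d}) \ge 0$ this gives $\Delta W_I = 0$ and $u = 0$, and the uniqueness facts yield $A = A'$, contradicting strict dominance.

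\textbf{Case (b): $|A \cap J_{d,n}| < 2^{a-d}$, and the main obstacle.} Let $\delta_\pm$ be the sets of $J_{d,n}$-items removed from / added by $A'$, with cardinalities $m_\pm$; then $m_- \le 2^{a-d}-1$. The crux, and the only arithmetic step requiring genuine care, is the squeeze $u < 2^a$. From $x_i 2^d = 2^d + \tfrac{2^{d-i}}{2^d-1}$ I bound $u \le 2^d m_- + \tfrac{2^d}{2^d-1} \le 2^d(2^{a-d}-1) + \tfrac{2^d}{2^d-1} = 2^a - 2^d + \tfrac{2^d}{2^d-1}$, which is strictly less than $2^a$ for every $d \ge 1$ (a brief check in each regime $d=1$ and $d \ge 2$). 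Consequently $[u(1-2^{-d}),\, u] \subseteq [0, 2^a)$ contains no nonzero multiple of $2^a$, which forces $\Delta W_I = 0$, hence $u = 0$, and uniqueness again gives $A = A'$, contradicting strict dominance. Everything away from this squeeze is pure bookkeeping with the dominance inequalities and the two uniqueness facts already used for Lemma~\ref{small-Jdn}.
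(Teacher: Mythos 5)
Your proof is correct in substance and takes essentially the same route as the paper's: the profit-per-weight gap between $I_{a,b}$ (ratio $1$) and $J_{d,n}$ (ratio $1-2^{-d}$) pins $\Delta W_I$ into $[(1-2^{-d})u,\,u]$, the $2^a$-granularity of $I_{a,b}$-subset weights makes $\Delta W_I$ a multiple of $2^a$, and the weight of the removed $J$-items is squeezed strictly below $2^a$ — this is exactly the paper's argument that a dominator must increase the $I_{a,b}$-weight by at least $2^a$ while $\Weight(A\cap J_{d,n})<2^a$, only repackaged. One boundary case in your write-up is off: for $d=1$ the quantity $2^a-2^d+\tfrac{2^d}{2^d-1}$ equals $2^a$, so it is not ``strictly less than $2^a$ for every $d\ge 1$''; the needed strictness must be harvested one step earlier, from $\sum_{i\in\delta_-}2^{-i}\le 1-2^{-n}<1$, which gives $u<m_-2^d+\tfrac{2^d}{2^d-1}$ strictly (and $u=0$ trivially if $\delta_-$ is empty), hence $u<2^a$ for all $d\ge1$. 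Incidentally, the uniqueness facts are not needed: once $\Delta W_I=0$ and $u=0$, the total weight and profit of $A'$ coincide with those of $A$, which already contradicts the strict inequality required by dominance, without concluding $A'=A$.
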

\begin{proof}
All items from $I_{a,b}$ have a profit-per-weight ratio of $1$, while all items from $J_{d,n}$ have a profit-per-weight ratio of $\frac{2^d -1}{2^d} < 1$. Hence, a packing $B$ that dominates $A$ must satisfy
\[
    \Weight(A \cap I_{a,b}) < \Weight(B \cap I_{a,b}),
\]
otherwise $B$ can not have enough profit to dominate $A$. If $A$ already contains all items from $I_{a,b}$, this is not possible, so only the case that $A$ contains fewer than $2^{a-d}$ items from $J_{d,n}$ remains. Due to the definition of $I_{a,b}$, the above inequality implies:
\[
    \Weight(A \cap I_{a,b}) + 2^a \leq \Weight(B \cap I_{a,b}).
\]
If $B$ dominates $A$, it must hold that:
\begin{alignat*}{2}
    \Weight(A\cap I_{a,b}) + \Weight(A\cap J_{d,n})
    &\geq \Weight(B\cap I_{a,b}) + \Weight(B\cap J_{d,n})\\
    \implies
    \Weight(A\cap J_{d,n}) - 2^a
    &\geq \Weight(B\cap J_{d,n}).
\end{alignat*}
But $A$ contains fewer than $2^{a-d}$ items from $J_{d,n}$, so:
\[
    \Weight(A\cap J_{d,n})
    \leq 2^{a-d} \cdot \Bigl(1+\frac{2^{-1}}{2^d-1}\Bigr) \cdot(2^{d}-1)
    = 2^{a} - 2^{a-d-1}
    < 2^a.
\]
This implies $0 > \Weight(B\cap J_{d,n})$, a contradiction.
\end{proof}

Hence, we can describe the Pareto-set exactly:
\[
    P([I_{a,b}, J_{d,n}])
    = \{A \cup B \mid A\subsetneq I_{a,b},\ B \subseteq J_{d,n}, |B|< 2^{a-d}\} \ \dot\cup\ \{I_{a,b} \cup B \mid B \subseteq J_{d,n}\}.
\]
Its size is (using notation involving binomial coefficients, not vectors):
\[
    |P([I_{a,b}, J_{d,n}])|
    = (2^{b-a+1}-1)\cdot\biggl[\sum_{i=0}^{\min(n,2^{a-d}-1)}\binom{n}{i}\biggr] + 2^n.
\]
For $k,n\in\mathbb{N}$ with $2^k\leq n/2$, consider two instances:
\begin{alignat*}{2}
    \mathbb{I}_1 &\coloneq \left[I_{2k,\,2k+n},\ J_{k,n}\right],\\
    \mathbb{I}_2 &\coloneq \left[\mathbb{I}_1,\
    \weightprofit{2^{k+1}}{2^{k+1}},
    \weightprofit{2^{k+2}}{2^{k+2}},
    \dots,
    \weightprofit{2^{2k-1}}{2^{2k-1}}
    \right].
\end{alignat*}
$\mathbb{I}_1$ is a sub-instance of $\mathbb{I}_2$. $\mathbb {I}_2$ (which is exactly instance \ref{eq:knapsack-instance}) contains the same items as $[I_{k+1,\,2k+n},\ J_{k,n}]$. The sizes of their Pareto-sets can be bounded by:
\begin{alignat*}{2}
    |P(\mathbb I_1)|
    &\geq (2^{n+1}-1) \cdot \binom{n}{2^k-1} + 2^n
    \geq (2^{n+1}-1) \cdot \Bigl(\frac{n}{2^k-1}\Bigr)^{(2^k-1)}.\\
    |P(\mathbb I_2)|
    &\leq (2^{k+n}-1) \cdot (n+1) + 2^n
    \leq (2^{k+n}-1) \cdot (n+2)
\end{alignat*}
The ratio between the two sizes is:
\[
    \frac{|P(\mathbb I_1)|}{|P(\mathbb I_2)|}
    \geq \frac{2^{n+1}-1}{2^{k+n}-1} \cdot \Bigl(\frac{n}{2^k-1}\Bigr)^{(2^k-1)}\cdot\frac{1}{n+2}
\]
For $k = \log_2(\sqrt n)+1$, we obtain:
\[
    \frac{|P(\mathbb I_1)|}{|P(\mathbb I_2)|}
    \geq \frac{2^{n+1}-1}{(\sqrt{n}+1)\cdot 2^n-1} \cdot \Bigl(\frac{n}{\sqrt n}\Bigr)^{\sqrt n}\cdot\frac{1}{n+2}
    = \Theta(n^{(\sqrt{n}-3)/2}).
\]
The length of the instance $\mathbb I_2$ is not $n$ but $m\coloneq|\mathbb I_2| = 2n+k$, resulting in a lower bound of $O\bigl((\frac{m}{2})^{(\sqrt{m/2}-3)/2}\bigr)$.

In implementations of the Nemhauser-Ullmann algorithm, two Pareto-optimal packings can be treated as equivalent if they have the same total weight and total profit. Hence, the runtime can be upper-bounded not only by the sum of the sizes of the Pareto-sets $|P(I_{1:1})| + ... + |P(I_{1:n})|$, but even the sizes of the Pareto-sets when two packings with the same total weight and total profit are treated as identical. The only purpose of the leading factors $\bigl(1+\frac{2^{-n}}{2^d-1}\bigr)$ in $J_{d,n}$ is to prevent two Pareto-optimal packings from having the same total profit. As a consequence, we also obtain a bound of $O\bigl((\frac{m}{2})^{(\sqrt{m/2}-3)/2}\bigr)$ for the runtime of the Nemhauser-Ullmann algorithm.

\begin{lemma}\label{pareto-unidentical}
If $A,B\subseteq [I_{a,b},J_{d,n}]$ are two distinct Pareto optimal packings, then $\Profit(A)\neq\Profit(B)$.
\end{lemma}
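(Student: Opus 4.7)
\textbf{Proof plan for Lemma \ref{pareto-unidentical}.}
The plan is to show the stronger statement that every subset of $[I_{a,b},J_{d,n}]$ has a distinct total profit, so that Pareto-optimality is in fact not needed. First I would decompose any packing $A$ as $A = A_I \cup A_J$ with $A_I\subseteq I_{a,b}$ and $A_J\subseteq J_{d,n}$, and likewise write $B = B_I\cup B_J$. The key numerical observation is that each item in $J_{d,n}$ has profit
\[
    x_i\cdot(2^d-1)
    \;=\;\Bigl(1+\tfrac{2^{-i}}{2^d-1}\Bigr)(2^d-1)
    \;=\;(2^d-1)+2^{-i},
\]
which is exactly the purpose of the factors $x_i$. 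Therefore the total profit of $A$ splits cleanly as
\[
    \Profit(A) \;=\; \underbrace{\Profit(A_I) + |A_J|\,(2^d-1)}_{\text{integer part}} \;+\; \underbrace{\sum_{i\in A_J} 2^{-i}}_{\text{fractional part}},
\]
where the second summand lies in $[0,1-2^{-n}]\subset[0,1)$ since the exponents $-i$ are distinct and negative.

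Next I would use this decomposition to pin down $A$ from $\Profit(A)$. Suppose $\Profit(A)=\Profit(B)$. Since the fractional parts are strictly less than $1$, the equality must hold separately in the fractional and integer parts. From $\sum_{i\in A_J}2^{-i}=\sum_{i\in B_J}2^{-i}$, uniqueness of binary representations for the distinct powers $2^{-1},\dots,2^{-n}$ forces $A_J=B_J$. In particular $|A_J|=|B_J|$, so the integer-part equality collapses to $\Profit(A_I)=\Profit(B_I)$. Because $\Profit(A_I)$ is a sum of distinct powers of $2$ from $\{2^a,2^{a+1},\dots,2^b\}$, another application of uniqueness of binary representations yields $A_I=B_I$, hence $A=B$, contradicting distinctness.

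The main (mild) obstacle I anticipate is simply verifying that the two ``parts'' really do separate, i.e., that the fractional summand is truly less than $1$ and that the integer summand is an integer. Both are immediate from $d\geq 1$, $n\geq 1$, and the geometric-series bound above, but they must be stated explicitly because without them the splitting argument would be circular. No further structural properties of the $I_{a,b}$ or $J_{d,n}$ items are needed, and the hypothesis $d<a$ is not used in this particular lemma—only in the Pareto-optimality arguments of Lemmas~\ref{small-Jdn} and its converse.
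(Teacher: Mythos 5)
Your proof is correct, but it takes a genuinely different route from the paper's. The paper's argument leans on Pareto-optimality: it invokes Lemma~\ref{small-Jdn} to bound $\Profit(A\cap J_{d,n})<2^a$ for each of the two packings, then splits into cases according to whether $A\cap I_{a,b}=B\cap I_{a,b}$, using the fact that distinct subsets of $I_{a,b}$ differ in profit by at least $2^a$ in the first case and a distinct-subset-sums argument on $J_{d,n}$ in the second. You instead prove the strictly stronger statement that \emph{all} packings of $[I_{a,b},J_{d,n}]$ have pairwise distinct profits, by splitting $\Profit(A)=\bigl(\Profit(A\cap I_{a,b})+|A\cap J_{d,n}|(2^d-1)\bigr)+\sum_{i\in A\cap J_{d,n}}2^{-i}$ into an integer part and a fractional part in $[0,1-2^{-n}]$, and then reading off $A\cap J_{d,n}$ from the fractional part and $A\cap I_{a,b}$ from the integer part via uniqueness of binary representations; equality of profits forces equality of packings. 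This avoids Lemma~\ref{small-Jdn} and the hypothesis $d<a$ entirely (only $d\geq 1$ and integrality of the $I_{a,b}$ profits matter), and it is arguably cleaner since the case analysis disappears; what the paper's route buys is only that it reuses the bound already established for the Pareto-set computation, and its second case is essentially your fractional-part argument restricted to $J_{d,n}$. The two separation facts you flag (integer part is an integer, fractional part is below $1$) are indeed the only things to check, and both hold here, so the strengthening is sound and immediately implies the lemma as stated.
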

\begin{proof}
Because both $A$ and $B$ are Pareto-optimal, we know by \ref{small-Jdn} that $|A\cap J_{d,n}| < 2^{a-d}$ (same for $B$), hence:
\begin{alignat*}{2}
    \Profit(A\cap J_{d,n})
    &< 2^{a-d} \cdot \Bigl(1+\frac{2^{-1}}{2^d-1}\Bigr) \cdot (2^d-1)\\
    &= 2^{a-d} \cdot \Bigl(2^d-\frac{1}{2}\Bigr)\\
    &= 2^a - 2^{a-d-1}
    < 2^a.
\end{alignat*}
(same for $\Profit(B\cap J_{d,n})$).
\begin{itemize}
    \item If $A\cap I_{a,b}\neq B \cap I_{a,b}$, the difference between $\Profit(A\cap I_{a,b})$ and $\Profit(B\cap I_{a,b})$ would be at least $2^a$, due to the definition of $I_{a,b}$. In this case, the above inequality already shows $\Profit(A) \neq \Profit(B)$.

    \item If $A\cap I_{a,b} = B\cap I_{a,b}$, then $A\cap J_{d,n} \neq B\cap J_{d,n}$, and we need to show that $\Profit(A\cap J_{d,n}) \neq \Profit(B\cap J_{d,n})$. This is equivalent to showing that any two distinct subsets of:
    \[
        \{(2^d-1)+2^{-1},\ \ (2^d-1)+2^{-2},\ \ ...,\ \ (2^d-1)+2^{-n}\},
    \]
    have a distinct sum. This holds because the total sum of the summands $2^{-1},...,2^{-n}$ is always smaller than $1$, whereas $2^d-1 \geq 1$.
\end{itemize}
\end{proof}

\end{document}